\documentclass{article}

\PassOptionsToPackage{numbers, compress}{natbib}
 \usepackage[preprint]{neurips_2026}

\usepackage[utf8]{inputenc} 
\usepackage[T1]{fontenc}    
\usepackage{hyperref}       
\usepackage{url}            
\usepackage{booktabs}       
\usepackage{amsfonts}       
\usepackage{nicefrac}       
\usepackage{microtype}      
\usepackage{xcolor}         

\usepackage{graphicx}
\usepackage{subcaption}

\usepackage{algorithm}
\usepackage{algpseudocode}

\usepackage{amsmath}
\usepackage{amssymb}
\usepackage{mathtools}
\usepackage{amsthm}
\usepackage{multirow}

\usepackage[capitalize,noabbrev]{cleveref}

\theoremstyle{plain}
\newtheorem{theorem}{Theorem}[section]
\newtheorem{proposition}[theorem]{Proposition}

\theoremstyle{definition}
\newtheorem{definition}[theorem]{Definition}

\theoremstyle{remark}

\usepackage{wrapfig}

\title{Path Integral Value Matching for Linear Quadratic Stochastic Optimal Control}

\author{
  \textbf{Bangyan Liao},
  \textbf{Chenglei Yu},
  \textbf{Yuchen Yang},
  \textbf{Chuanrui Wang}, \\
  \textbf{Zhisheng Song}, 
  \textbf{Peidong Liu},
  \textbf{Tailin Wu}$^{\dagger}$  \\[0.5em]
  Westlake University \\
 School of Engineering \\[0.5em]
  \textcolor{purple}{\textbf{{Project Page}: \url{https://github.com/bangyan101/PIVM/}}
}
}

\begin{document}

\maketitle

\begin{abstract}
Linear Quadratic Stochastic Optimal Control (LQ-SOC) establishes a fundamental framework for steering noisy dynamical systems and has recently gained renewed interest in the machine learning community. However, current state-of-the-art policy-based methods suffer from prohibitive computational costs and instability due to their heavy reliance on {full-trajectory simulation}. To overcome these limitations, we propose a paradigm shift toward a value-based approach by revisiting Path Integral Control (PIC). Although standard PIC suffers from the same high-variance bottleneck as policy-based methods, we discover that by \textit{truncating and marginalizing} the original path integral formulation, we can derive a temporal recursive form of the value function. Building upon this theoretical foundation, we propose the \textbf{Path Integral Value Matching (PI-VM)} algorithm. Specifically, we employ temporal-difference learning to approximate the recursive value dynamics, and further integrate the Girsanov theorem with experience replay to enable off-policy training. We benchmark PI-VM against SOTA policy-based methods across various SOC benchmarks and sampling tasks. Empirical results demonstrate that PI-VM matches SOTA precision with an order-of-magnitude efficiency gain in low-dimensional settings, while effectively mitigating mode collapse in high-dimensional scenarios. Consequently, PI-VM offers a scalable solution for solving complex SOC problems. 

\end{abstract}
\section{Introduction}
Stochastic Optimal Control (SOC)~\cite{bellman1966dynamic, fleming2012deterministic} establishes a fundamental framework for steering noisy dynamical systems toward high-reward regions. While dating back several decades, the Linear-Quadratic Stochastic Optimal Control (LQ-SOC) formulation~\cite{domingo2024stochastic} has recently attracted renewed interest within the machine learning community. This is driven by its profound theoretical connections to generative modeling (via diffusion models~\cite{domingo-enrich2025adjoint}), Optimal Transport (via Schrödinger Bridges~\cite{chen2021stochastic}), and energy-based model sampling~\cite{liu2025adjoint}.

Despite its theoretical significance, solving SOC problems remains computationally prohibitive. Current state-of-the-art approaches largely follow the policy-based paradigm, known as the Iterative Diffusion Optimization~\cite{domingo2024stochastic, domingo-enrich2025adjoint}. These methods optimize neural control policies by minimizing the divergence between parameterized path measures and optimal path measures. However, a critical bottleneck restricts their scalability: they rely heavily on on-policy full-trajectory simulation for gradient estimation. This dependence on long-horizon stochastic simulation leads to excessive computational costs and high-variance gradient estimates in high dimension~\cite{blessing2025trust, domingo-enrich2025adjoint}. Moreover, the variance of off-policy importance weights blows up when the dimension is high, which further destabilizes the off-policy training process~\cite{domingo2024taxonomy}.

To fundamentally overcome these limitations, we propose a paradigm shift toward a value-based approach by revisiting Path Integral Control (PIC)~\cite{todorov2006linearly, kappen2007introduction, kappen2005linear}. Unlike policy-based methods that search locally, PIC aims to solve the Hamilton-Jacobi-Bellman (HJB)~\cite{bellman1966dynamic} equation globally. However, naive PIC has historically remained practically intractable because—similar to policy-based methods—it inherently relies on high-variance full-trajectory simulation. Specifically, traditional PI methods require sampling complete trajectories from the current time $t$ to the terminal time to estimate the value function. This results in exploding variance in high-dimensional stochastic environments~\cite{thijssen2015path}.


Consequently, the field faces a dilemma: both dominant policy-based methods and classical value-based PI methods are bottlenecked by the high variance and computational cost of full-trajectory simulation. To bridge this gap, we truncate and marginalize the original path integral formulation, and derive a continuous-time recursive form of the value function. 

Motivated by this insight, we introduce the \textbf{Path Integral Value Matching (PI-VM)} algorithm. PI-VM employs Temporal Difference (TD) learning to approximate the recursive value dynamics, effectively replacing high-variance long-horizon integration with stable short-term bootstrapping. Furthermore, we integrate the Experience Replay buffer~\cite{zhang2017deeper} with the Girsanov theorem~\cite{oksendal2013stochastic} to support off-policy training. This combination allows for mathematically grounded trajectory reweighting, enabling flexible transitions between on-policy exploitation and off-policy  exploration.

Our main contributions are summarized as follows: \begin{itemize}
\item  We bypass the full-trajectory simulation by deriving a continuous-time recursive form of the value function, establishing the theoretical foundation for efficient and stable value training.
\item We propose a practical solver \textbf{PI-VM} that utilizes an off-policy TD loss to learn the value recursive dynamics with experience replay and the Girsanov theorem.
\item Experiments on SOC benchmarks and sampling tasks demonstrate that PI-VM outperforms existing baselines in terms of training speed, numerical stability, and control accuracy, offering a scalable paradigm for stochastic control.
\end{itemize}

\section{Preliminaries and Related Works}
\subsection{Preliminaries}
\label{label:Preliminaries}
In this section, we introduce the preliminary background for stochastic optimal control and path integral control. For a detailed introduction and survey, please refer to~\cite{fleming2012deterministic, hu2023recent}.

\paragraph{Linear Quadratic SOC.} 
We define the Linear Quadratic Stochastic Optimal Control (LQ-SOC) problem as:
\begin{equation}
\label{eq:lq_soc_obj}
\begin{aligned}
\min_{u \in \mathcal{U}} \mathbb{E}_{\mathbb{P}^u} &\left[  \int_{0}^{1}  \left( \frac{1}{2}\left\|u\left(X_{t}, t\right)\right\|^{2}+f\left(X_{t}, t\right)\right) \mathrm{d} t+g\left(X_{1}\right)\right], \\
\text { s.t. } \mathrm{d} X_{t}&=\left(b\left(X_{t}, t\right)+\sigma(t) u\left(X_{t}, t\right)\right) \mathrm{d} t+\sigma(t) \mathrm{d} B_{t}, \quad X_{0} \sim \rho_0.
\end{aligned}
\end{equation}
where $b \in C^{1,0}(\mathbb{R}^d \times [0, 1]; \mathbb{R}^d)$ is the drift function, $\sigma \in C([0, 1]; \mathbb{R}^{d \times m})$ 
denotes the diffusion coefficient, $\rho_0$ represents the source distribution, process $\{B_t\}_{t \in [0, 1]}$ is an $\mathcal{F}_t$-standard Brownian motion, $u \in C^{1,0}(\mathbb{R}^d \times [0, 1]; \mathbb{R}^d)$ is the control signal within the set of admissible control $\mathcal{U}$, $f \in C^{1,0}(\mathbb{R}^d \times [0, 1]; \mathbb{R})$ denotes the state cost (or running cost), and $g \in C^1(\mathbb{R}^d; \mathbb{R})$ represents the terminal cost. 

The state process $\{X_t\}_{t \in [0, 1]}$ is governed by the controlled SDE in \cref{eq:lq_soc_obj}, which induces the controlled path measure $\mathbb{P}^u$ on $\mathcal{C}([0, 1]; \mathbb{R}^d)$. For the uncontrolled SDE ($u=0$), it induces a reference path measure $\mathbb{P}^{0}$ on $\mathcal{C}([0, 1]; \mathbb{R}^d)$. We denote $\mathbb{P}_{[t, s]}^{0}$ as the restriction of this measure to the path segment $\{X_r\}_{r \in [t, s]}$. 

For simplicity, we define the work functional as: 
\begin{equation}
    \mathcal{W}(X, t)=\int_{t}^{1}  f\left(X_{s}, s\right) \mathrm{d} s+g\left(X_{1}\right).
\end{equation}
Although the LQ formulation in \cref{eq:lq_soc_obj} provides a structural simplification compared to the general SOC problem, obtaining the optimal solution remains a non-trival task. The optimization process is significantly impeded by the prohibitive cost of forward simulations, non-convex landscapes, and inherent stochasticity.
\paragraph{Cost functional and value function.} Given an initial condition $(x, t) \in \mathbb{R}^d \times [0,1]$, the expected future cost incurred by following a control $u$ is defined as
\begin{equation}
\begin{aligned}
J(u ; x, t) :=  \mathbb{E}_{ \mathbb{P}^u} \bigg[  \int_{t}^{1}  \frac{1}{2} \|u(X_{s}, s)\|^{2}  \mathrm{d} s
 +  \mathcal{W}(X, t) \bigg| X_{t} = x \bigg]
\end{aligned}
\end{equation}
Then, we can define the corresponding value function as the cost functional with optimal control $u^*$:
\begin{equation}
    V(x,t) := \min_{u \in \mathcal{U}} J(u;x,t)=J(u^*;x,t).
\end{equation}
Learning the value function is more tractable than direct optimization due to its smoother geometry and ability to leverage global information against non-convexity and noise. Moreover, in continuous-time RL, the value function is a more natural target than the Q-function, as the latter collapses to the former in the continuous limit~\cite{jia2023q}.


\paragraph{Path Integral Control.}
A key benefit of LQ-SOC is that the optimal value function satisfies a nonlinear partial differential equation known as the Hamilton–Jacobi–Bellman (HJB)~\cite{bellman1966dynamic} equation. Building on the HJB equation, the value function admits a closed-form representation known as the path integral control~\cite{todorov2006linearly, kappen2007introduction, kappen2005linear}. In particular, the value function can be expressed as
\begin{equation}
\label{eq:path_integral_form_of_v}
\begin{aligned}
&V(x,t) = -\log \mathbb{E}_{\mathbb{P}^0} \Bigg[ \exp \bigg( -\mathcal{W}(X, t) \bigg) \,\Big|\, X_t = x \Bigg],
\end{aligned}
\end{equation}

where $X_t$ follows the uncontrolled stochastic differential equation associated with the base drift. This can be proved by first applying the Cole-Hopf transform~\cite{evans2022partial} to convert the nonlinear HJB equation into a linear PDE, and then invoking the Feynman-Kac lemma~\cite{oksendal2013stochastic} to interpret the solution as a conditional expectation. 
Furthermore, the HJB equation not only characterizes the optimal value function but also yields the optimal control in closed form: 
\begin{equation}
    u^*(x,t) = - \sigma^T(t) \nabla V(x,t).
\end{equation} 
Nevertheless, practical implementation is hindered by the curse of dimensionality. High dimensions and noise induce exploding variance in the stochastic estimator, making the computational cost of Monte Carlo integration intractable.

\subsection{Related Works}

\paragraph{Policy-based SOC Solvers.} Current SOC solvers predominantly follow the \textit{Iterative Diffusion Optimization} paradigm, optimizing control policies by minimizing divergences (e.g., Cross-Entropy or Variance)~\cite{holdijk2023stochastic, nusken2021solving} or employing matching-based strategies like Adjoint Matching~\cite{domingo2024stochastic, domingo-enrich2025adjoint, domingo2024taxonomy}. While recent advances have incorporated trust-region techniques~\cite{blessing2025trust, von2025learning} and expanded into sampling tasks~\cite{liu2025adjoint, havens2025adjoint}, these methods rely heavily on on-policy forward simulation and stochastic back-propagation. Consequently, they suffer from significant computational overhead and high-variance gradient estimates, rendering them susceptible to instability and mode collapse in complex landscapes.

\paragraph{Value-based SOC Solvers.} Numerical methods for the HJB equation, such as finite difference schemes~\cite{bonnans2004fast, jensen2013convergence}, theoretically offer a global solution but are severely limited by the curse of dimensionality. Alternatively, Path Integral Control (PIC)~\cite{kappen2005linear, todorov2006linearly} applies the Feynman-Kac lemma to transform the PDE into a sampling problem, inspiring algorithms like MPPI~\cite{williams2017model} and Cross-Entropy methods~\cite{kappen2016adaptive}. However, standard PIC faces a critical bottleneck: the variance of the path integral estimator scales poorly with dimensionality~\cite{kazim2024recent}. In high-dimensional settings, this leads to prohibitive sample complexity and unstable convergence, limiting its applicability.

\paragraph{Continuous-time RL.} Continuous-time RL provides a rigorous theoretical framework for decision-making, established through entropy-regularized HJB equations~\cite{wang2020reinforcement} and martingale-based policy evaluation~\cite{jia2022policy, jia2023q}. Recent works have extended this foundation to model-free temporal difference methods~\cite{settai2025temporal} and diffusion-guided policies~\cite{hua2025continuous}. Unlike existing methods that rely on general policy iteration or direct HJB solving, our approach leverages the specific recursive structure of the path integral value function. This allows us to bypass high-order HJB terms and employ TD learning for stable, off-policy value estimation.

\section{Recursive Path Integral Theory}
\label{sec:method}

\subsection{Path Integral: A Recursive Form}
\label{subsec:path_integral}

The core insight of our work is that the path integral representation of optimal value function (\cref{eq:path_integral_form_of_v}), has the following recursive form.
\begin{proposition}
\label{prop:value_recursion}
For any $0 < t < s$, the following expectation equation holds:
\begin{equation}
\label{eq:key_recursive_value}
\begin{aligned}
\exp&\left( - V(X_t, t) \right)
=   \mathbb{E}_{\mathbb{P}^0}\Bigg[ \exp\left( - V(X_s, s) \right) 
 \exp\left(- \int_{t}^{s} f(X_r,r)\,\mathrm{d}r \right) \, \Bigg | \, \mathcal{F}_t \Bigg], \\
 &\quad V(x, 1) = g(x),
\end{aligned}
\end{equation}
where $X_t \in \mathbb{R}^d$ is the state of uncontrolled stochastic process.
\end{proposition}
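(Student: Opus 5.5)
The plan is to derive \cref{eq:key_recursive_value} directly from the path integral representation \cref{eq:path_integral_form_of_v}, using the Markov property of the uncontrolled diffusion and the tower property of conditional expectation. Write $\psi(x,t) := \exp(-V(x,t))$, so that \cref{eq:path_integral_form_of_v} reads
\begin{equation*}
\psi(x,t) = \mathbb{E}_{\mathbb{P}^0}\left[ \exp\left(-\mathcal{W}(X,t)\right) \,\middle|\, X_t = x\right].
\end{equation*}
The terminal condition is immediate: at $t=1$ the running-cost integral in $\mathcal{W}(X,1)$ vanishes, so $\psi(x,1) = \exp(-g(x))$, i.e.\ $V(x,1)=g(x)$.

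For $t<s\le 1$, the first step is to split the work functional additively:
\begin{equation*}
\mathcal{W}(X,t) = \int_t^s f(X_r,r)\,\mathrm{d}r + \mathcal{W}(X,s).
\end{equation*}
This makes the exponential factor into a product. The first factor, $\exp(-\int_t^s f\,\mathrm{d}r)$, is $\mathcal{F}_s$-measurable; the second, $\exp(-\mathcal{W}(X,s))$, depends only on the path on $[s,1]$.

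Next, condition on $\mathcal{F}_s$ inside $\mathbb{E}_{\mathbb{P}^0}[\,\cdot \mid \mathcal{F}_t]$, using $\mathcal{F}_t\subset\mathcal{F}_s$, and pull out the $\mathcal{F}_s$-measurable factor. This leaves $\mathbb{E}_{\mathbb{P}^0}[\exp(-\mathcal{W}(X,s))\mid\mathcal{F}_s]$.

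The key step is the Markov property of the uncontrolled SDE $\mathrm{d}X = b\,\mathrm{d}t + \sigma\,\mathrm{d}B$. Under the stated regularity ($b\in C^{1,0}$, assumed Lipschitz enough for strong uniqueness), the solution is a strong Markov process. Any functional of the future path $\{X_r\}_{r\in[s,1]}$ therefore satisfies
\begin{equation*}
\mathbb{E}_{\mathbb{P}^0}\left[ F(X_{[s,1]})\mid\mathcal{F}_s\right] = \mathbb{E}_{\mathbb{P}^0}\left[F(X_{[s,1]})\mid X_s\right].
\end{equation*}
This quantity evaluates to $\psi(X_s,s)$ by \cref{eq:path_integral_form_of_v} applied at time $s$. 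Substituting back gives
\begin{equation*}
\mathbb{E}_{\mathbb{P}^0}\left[\psi(X_s,s)\,e^{-\int_t^s f}\,\middle|\,\mathcal{F}_t\right] = \mathbb{E}_{\mathbb{P}^0}\left[e^{-\mathcal{W}(X,t)}\,\middle|\,\mathcal{F}_t\right].
\end{equation*}
One more application of the Markov property, now at time $t$, identifies the right-hand side with $\psi(X_t,t)$, which is the claim.

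The main obstacle is the technical justification rather than the algebra. First, $\exp(-\mathcal{W})$ must be integrable so that the conditional expectations are well defined. This holds if $f$ and $g$ are bounded below, or more generally under a growth condition; I would state such an assumption explicitly. Second, one must identify the regular conditional expectation given $X_t=x$ in \cref{eq:path_integral_form_of_v} with the version conditioned on $\mathcal{F}_t$. That identification is exactly the Markov/flow property of strong solutions. I would invoke the standard result (e.g.\ \cite{oksendal2013stochastic}) rather than reprove it.

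As an alternative route, one could apply It\^o's formula to the process $\psi(X_r,r)\exp(-\int_t^r f)$. The linear PDE obtained from HJB by the Cole--Hopf transform makes this process a local martingale, and a localization argument would then yield the same identity.
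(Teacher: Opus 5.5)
Your proof is correct and follows the paper's argument. You split $\mathcal{W}(X,t)$ at time $s$, apply the tower property over $\mathcal{F}_s\supset\mathcal{F}_t$, pull out the $\mathcal{F}_s$-measurable factor $\exp(-\int_t^s f\,\mathrm{d}r)$, and recognize the inner conditional expectation as $\exp(-V(X_s,s))$. Your explicit appeal to the Markov property, which identifies $\mathbb{E}_{\mathbb{P}^0}[\exp(-\mathcal{W}(X,s))\mid\mathcal{F}_s]$ with the $X_s$-conditioned path integral, makes precise a step the paper's appendix leaves implicit by writing the Feynman--Kac representation directly with $\mathcal{F}_t$-conditioning.
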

\begin{proof}
The result follows directly from the tower property of conditional expectation. By decomposing the path integral in \cref{eq:path_integral_form_of_v} into the intervals $[t, s]$ and $[s, 1]$, and taking the conditional expectation over the trajectory from $s$ to the terminal time, we recover the term $\exp\left( - V(X_s, s) \right)$ by definition. The detailed proof can be found in Appendix \ref{appendix:unbiasedness_properties}.
\end{proof}
Crucially, the recursive structure in \cref{eq:key_recursive_value} suggests that dynamic programming techniques can be adapted to solve for the value function iteratively.

\subsection{Properties of Recursive Path Integral}
\label{subsec:properties_path_integral}
This section investigates whether the above formulation is practically applicable, primarily by analyzing its convergence behavior and variance properties. By treating the recursion in \cref{eq:key_recursive_value} as an update operator, we firstly formulate the following bootstrapping rule.

\begin{definition}
\textit{For every $t \in [0, 1)$ and a fixed $\varepsilon > 0$, the updating rule at the $k$-th iteration is:}
\begin{equation}
\label{eq:value_iteration}
\begin{aligned}
V_{k}(X_t, t) = - \log \mathbb{E}_{\mathbb{P}^0}\Bigg[ \exp\left( - \tilde{V}_{k - 1}(X_{t + \varepsilon}, t + \varepsilon) \right) \exp\left(-\int_{t}^{t + \varepsilon} f(X_r,r)\,\mathrm{d}r \right) \, \Bigg | \, \mathcal{F}_t \Bigg],
\end{aligned}
\end{equation}
where $\tilde{V}_{k-1}$ is defined such that $\tilde{V}_{k-1}(x, s) = V_{k-1}(x, s)$ for $s < 1$ and $\tilde{V}_{k-1}(x, 1) = g(x)$.
\end{definition}
 To ensure the theoretical validity of this iterative scheme, we must guarantee that it converges to the true solution. The following theorem establishes this convergence property under mild assumptions.
\begin{theorem}
\label{thrm:convergence_of_bootstrapping}
In addition to the notation established in \cref{label:Preliminaries}, we further assume that:
\begin{enumerate}
\item[A1.] The running cost is bounded from below by a positive constant, i.e., $f(x, t) \geq f_{\inf} > 0$;
\item[A2.] The optimal value function is bounded from below, i.e., $V(x, t) \geq V_{\inf}$;
\item[A3.] The underlying SDE satisfies standard Lipschitz and growth conditions, ensuring the transition semigroup possesses the Feller property.
\end{enumerate}
If the sequence $\{V_k\}_{k=1}^\infty$ satisfies the recursion \eqref{eq:value_iteration} for all $t$, then it converges to the optimal value function:
\begin{equation}
\lim_{k \to \infty} V_k(x, t) = V(x, t).
\end{equation}
\end{theorem}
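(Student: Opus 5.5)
Work in the exponential variable $\Psi_k(x,t) := \exp(-V_k(x,t))$, where the recursion is linear. Define on bounded measurable functions $\psi$ on $\mathbb{R}^d\times[0,1]$ the operator
\begin{equation*}
(\mathcal{T}\psi)(x,t) := \mathbb{E}_{\mathbb{P}^0}\Big[\tilde\psi(X_{t+\varepsilon},t+\varepsilon)\, e^{-\int_t^{t+\varepsilon} f(X_r,r)\,\mathrm{d}r}\,\Big|\,X_t=x\Big],
\end{equation*}
with $\tilde\psi(\cdot,1)=e^{-g}$ enforced. For $t+\varepsilon>1$ the expectation is taken up to time $1$, which is the natural reading of the definition. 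Then \eqref{eq:value_iteration} reads $\Psi_k=\mathcal{T}\Psi_{k-1}$. By Proposition~\ref{prop:value_recursion}, $\Psi:=e^{-V}$ is a fixed point: $\mathcal{T}\Psi=\Psi$. A2 gives $0<\Psi\le e^{-V_{\inf}}$, so $\Psi$ is bounded. A3 (Feller property) ensures that $\mathcal{T}$ maps continuous bounded functions to continuous bounded functions. Hence the iterates stay in a well-behaved space and the conditional expectations are well defined pointwise in $x$. We assume the initialization $V_0$ is bounded below, so that $\Psi_0$ is bounded.

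The core step is a contraction estimate in the sup norm. The operator is affine, $\mathcal{T}\psi=\mathcal{L}\psi+h$, where the boundary term $h$ collects the paths reaching time $1$ and $\mathcal{L}$ is the linear part acting on interior values. Therefore
\begin{equation*}
|\Psi_k-\Psi|(x,t)=|\mathcal{L}(\Psi_{k-1}-\Psi)|(x,t)\le \mathbb{E}\big[e^{-\int_t^{t+\varepsilon}f}\,|\Psi_{k-1}-\Psi|(X_{t+\varepsilon},t+\varepsilon)\big],
\end{equation*}
and by A1 this is at most $e^{-f_{\inf}\varepsilon}\|\Psi_{k-1}-\Psi\|_\infty$. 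So $\mathcal{T}$ is a contraction with factor $\gamma=e^{-f_{\inf}\varepsilon}<1$, and $\|\Psi_k-\Psi\|_\infty\le\gamma^k\|\Psi_0-\Psi\|_\infty\to0$. This is exactly where positivity of $f$ matters: it plays the role of a discount factor.

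There is also a sharper, horizon-based argument. Whenever $t+\varepsilon\ge 1$, the boundary value is exact, so $\Psi_k=\Psi$ there for every $k\ge1$. By induction, $\Psi_k=\Psi$ on $[1-k\varepsilon,1]$, so the iteration is exact after $\lceil 1/\varepsilon\rceil$ steps. I would mention this as a remark, but use the contraction argument as the main proof, since it is robust to the fixed-$\varepsilon$ setup.

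The last step transfers the result back to $V_k=-\log\Psi_k$. The map $-\log$ is Lipschitz on $[c,\infty)$ for $c>0$. The main obstacle is a uniform positive lower bound on $\Psi$, i.e. an upper bound on $V$, which A1–A2 do not directly supply. I would first try to obtain pointwise convergence, which is all the theorem asks for. For fixed $(x,t)$ we have $\Psi(x,t)>0$ because the expectation of a strictly positive variable is positive. Since $\Psi_k(x,t)\to\Psi(x,t)$, continuity of $\log$ at that positive point gives $V_k(x,t)\to V(x,t)$. If uniform convergence were wanted, I would restrict to compact sets and use the Feller continuity and positivity to bound $\Psi$ below there.
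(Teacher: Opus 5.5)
Your proposal is correct and follows the paper's argument: pass to $e^{-V_k}$, observe that the terminal values cancel so the affine update operator is a sup-norm contraction with factor $e^{-f_{\inf}\varepsilon}$, and conclude. You also make explicit two steps the paper leaves implicit, namely that $e^{-V}$ is the fixed point (via Proposition~\ref{prop:value_recursion}) and the pointwise transfer back through $-\log$ using $e^{-V}>0$; your side remark that the iterates are exact on $[1-k\varepsilon,1]$ is also correct, and it gives exact convergence after $\lceil 1/\varepsilon\rceil$ steps without using A1.
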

\begin{proof}
    Available in Appendix \ref{appendix:convergence_analysis_of_recursive_update}.
\end{proof}
While Theorem \ref{thrm:convergence_of_bootstrapping} guarantees correctness, practical feasibility depends on sample efficiency. A fundamental advantage of our recursive value iteration over direct Monte Carlo estimation (which samples full trajectories to the terminal time) is the significant reduction in estimation variance. We formalize this benefit through the following variance decomposition analysis.

\begin{proposition}
\label{prop:variance_decomposition}
For any $t \in [0, 1)$ and $s \in (t,1]$,
\begin{equation}
\label{eq:variance_decomposition}
\begin{aligned}
& \mathbb{V}(Z_t|\mathcal{F}_t) = \mathbb{V}\Big(\mathbb{E}[Z_s|\mathcal{F}_s] \exp\Big(- \int_{t}^{s} f(X_r,r)\,\mathrm{d}r \Big) \Big| \mathcal{F}_t\Big) \\
& + \mathbb{E} \Big[\mathbb{V}(Z_s|\mathcal{F}_s) \exp\Big(- 2\int_{t}^{s} f(X_r,r)\,\mathrm{d}r \Big)|\mathcal{F}_t \Big],
\end{aligned}
\end{equation}
where the expectation and variance are all taken over $\boldsymbol{X} \sim \mathbb{P}^{0}$ and the notation $Z_t$ is defined as
\begin{equation}
    Z_t = \exp\left(-\mathcal{W}(X, t)\right).
\end{equation}
\end{proposition}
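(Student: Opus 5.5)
The plan is to apply the law of total variance (the conditional variance decomposition) to $Z_t$ with respect to the nested filtrations $\mathcal{F}_t \subseteq \mathcal{F}_s$, after first factorizing $Z_t$ multiplicatively. Write
\begin{equation*}
Z_t = \exp\Big(-\int_t^s f(X_r,r)\,\mathrm{d}r\Big)\, Z_s, \qquad A_{t,s} := \exp\Big(-\int_t^s f(X_r,r)\,\mathrm{d}r\Big).
\end{equation*}
This identity follows from splitting the integral in $\mathcal{W}(X,t)$ over $[t,s]$ and $[s,1]$. It is the same decomposition used in the proof of \cref{prop:value_recursion}. The crucial observation is that $A_{t,s}$ is $\mathcal{F}_s$-measurable, since it depends only on the path on $[t,s]$.

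Next we invoke the conditional law of total variance under $\mathbb{P}^0$:
\begin{equation*}
\mathbb{V}(Z_t\mid\mathcal{F}_t) = \mathbb{V}\big(\mathbb{E}[Z_t\mid\mathcal{F}_s]\,\big|\,\mathcal{F}_t\big) + \mathbb{E}\big[\mathbb{V}(Z_t\mid\mathcal{F}_s)\,\big|\,\mathcal{F}_t\big].
\end{equation*}
This identity is proved by expanding $\mathbb{E}[Z_t^2\mid\mathcal{F}_t]$ via the tower property, inserting $\pm\,\mathbb{E}[(\mathbb{E}[Z_t\mid\mathcal{F}_s])^2\mid\mathcal{F}_t]$, and using $\mathbb{E}[\mathbb{E}[Z_t\mid\mathcal{F}_s]\mid\mathcal{F}_t]=\mathbb{E}[Z_t\mid\mathcal{F}_t]$.

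We then pull out the known factor. Because $A_{t,s}$ is $\mathcal{F}_s$-measurable, we have $\mathbb{E}[Z_t\mid\mathcal{F}_s] = A_{t,s}\,\mathbb{E}[Z_s\mid\mathcal{F}_s]$. For the same reason $\mathbb{V}(Z_t\mid\mathcal{F}_s) = A_{t,s}^2\,\mathbb{V}(Z_s\mid\mathcal{F}_s)$, and $A_{t,s}^2 = \exp(-2\int_t^s f\,\mathrm{d}r)$. Substituting these into the two terms gives exactly \cref{eq:variance_decomposition}. By the Markov property, $\mathbb{E}[Z_s\mid\mathcal{F}_s] = \exp(-V(X_s,s))$, which links the first term to \cref{prop:value_recursion}. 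This identification is not needed for the identity itself.

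The only real obstacle is integrability, i.e.\ ensuring that $Z_t$ has a finite second moment so that every conditional variance is well defined. Under A1 and a lower bound on $g$ (or under A2 together with boundedness of $\exp(-g)$), $0<Z_t\le C$ almost surely, so all quantities are bounded and the manipulations are justified. Failing that, I would state the proposition under the assumption $\mathbb{E}[Z_t^2]<\infty$. The identity then holds as a statement in $L^1$, with both sides possibly infinite handled by monotonicity.
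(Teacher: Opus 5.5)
Your proposal is correct and follows the paper's proof essentially step for step. Like the paper, you factor $Z_t = Z_s\exp(-\int_t^s f(X_r,r)\,\mathrm{d}r)$ and apply the conditional law of total variance for $\mathcal{F}_t \subseteq \mathcal{F}_s$; you then pull the $\mathcal{F}_s$-measurable factor out of the inner conditional mean, and out of the inner conditional variance as its square. Your integrability caveat and the identification $\mathbb{E}[Z_s\mid\mathcal{F}_s]=\exp(-V(X_s,s))$ are extras that the paper's argument leaves implicit.
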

%
\begin{proof}
    Available in Appendix
    \ref{appendix:variance_analysis}.
\end{proof}
Equation \eqref{eq:variance_decomposition} offers a clear interpretation of the efficiency gain. The term on the left-hand side (LHS) represents the variance of the naive Monte Carlo estimator. The first term on the right-hand side (RHS) corresponds to the variance of our recursive estimator (assuming $V_k$ has converged to $V$). Consequently, the second term on the RHS quantifies the explicit \textit{variance reduction} achieved by our method. This reduction is particularly substantial when the current time $t$ is far from the terminal time, highlighting the superiority of the \cref{eq:key_recursive_value} in long-horizon problems.

\section{Path Integral Value Matching (PI-VM)}
\label{sec:pi_td_algorithm}

Guided by the recursive theoretical foundation, we propose the PI-VM algorithm. In this section, we will detail the derivation of the recursive TD loss, its extension to off-policy scenarios using the Girsanov theorem, and the practical implementation specifics.

\subsection{TD Learning}
\label{subsec:pi_td_learning}
To implement the update rule \cref{eq:key_recursive_value} in a practical algorithm, we parameterize the value function $V_k(x, t)$ using a neural network $V_{\theta}(x, t)$ with parameters $\theta$.
Consequently, the value iteration is formulated as a deep learning optimization task.
Specifically, the loss function of our PI-VM learning is defined as follows:

\begin{definition}[\textbf{Path Integral TD Loss}]
\label{def:pi_td_loss}
Consider a state $x$ with corresponding time $t$, a fixed lookahead horizon determined by $s = \min(1, t + M\Delta t)$, the PI-VM loss is defined as:
\begin{equation}
\label{eq:loss_base}
    \ell(\theta) =  \left\| V_{\theta}(x,t) - \hat{V}_{\theta}(x, t, s) \right\|^2,
\end{equation}
where the target value $\hat{V}_{\theta}$ is estimated via a Monte Carlo approximation of the path integral. Specifically, given $N$ sample trajectories, the target is computed as:
\begin{equation}
\label{eq:target_value}
\begin{aligned}
    \hat{V}_{\theta}(x, t, s) = -\log \frac{1}{N} \sum_{j=1}^N \exp\left( - \hat{\mathcal{W}}(X^{(j)}_{[t,s]}) - \hat{\mathcal{G}}(X^{(j)}_{s}) \right) ,
\end{aligned}
\end{equation}
where each trajectory $X^{(j)}_{[t,s]}$ follows the uncontrolled path measure $\mathbb{P}^0_{[t,s]}$ initialized at $X_t^{(j)} = x$. The functional terms are defined as:
\begin{equation}
\label{eq:functional_1}
\begin{aligned}
    \hat{\mathcal{W}}(X^{(j)}_{[t,s]}) = \int_{t}^{s} f(X^{(j)}_r, r) \, \mathrm{d}r, \quad
    \hat{\mathcal{G}}(X^{(j)}_{s}) = 
    \begin{cases}
        V_{\theta}(X^{(j)}_s, s) & s < 1, \\
        g(X^{(j)}_s)           & s = 1.
    \end{cases}
\end{aligned}
\end{equation}
\end{definition}

\textbf{Remark.} While our method also relies on TD learning to solve the dynamic programming problem, similar to reinforcement learning (RL) approaches, its underlying nature is fundamentally different. We highlight the distinctions with classical RL and risk-sensitive RL~\cite{mihatsch2002risk, noorani2023exponential} below.

\textit{PI-VM vs. classical RL.} A common misconception is that our method corresponds to a continuous version of the classical Bellman or optimal Bellman equations~\cite{bellman1966dynamic}. In fact, directly taking the discrete-to-continuous limit of these equations does not yield our log-sum-exp formulation. Instead, our formulation arises directly from path integral control.

\textit{PI-VM vs. risk-sensitive RL.} Although the Bellman equation in risk-sensitive RL~\cite{mihatsch2002risk, noorani2023exponential} also takes a log-sum-exp form, its object differs: risk-sensitive RL applies it to the value function, whereas in our method, the log-sum-exp formulation applies to the optimal value function.

\begin{algorithm*}[t] 
   \caption{Path Integral Value Matching (PI-VM)}
   \label{alg:pi_td}
   \begin{algorithmic}[1]
      \State \textbf{Input:} Problem Setup, Hyperparameters (Batch size $B$, Sample Size $N$, Sample Steps $M$, EMA $\tau$, Learning Rate $\eta$).
      \State \textbf{Initialize:} Replay Buffer $\mathcal{D}$, Value Network $V_\theta$, Target Network $V_{\hat{\theta}}$.
      
      \For{each training iteration}
         \If{time to update buffer}
            \State Rollout trajectories using current best control.
            \For{each sampled state in trajectories}
               \State Simulate $N$ short-term $M$-step path branches.
               \State Compute functionals $\hat{\mathcal{W}}, \hat{\mathcal{S}}$ for each branch following \cref{eq:functional_1,eq:functional_2}.
               \State Store transition tuple into $\mathcal{D}$.
            \EndFor
         \EndIf

         \State Sample a batch $\mathcal{B}$ of transitions from $\mathcal{D}$.
         \State Compute target functional $\hat{\mathcal{G}}$ and value $\hat{V}_{\hat{\theta}}$ following \cref{eq:functional_1,eq:target_value_controlled}.
         
         \State Compute TD loss: $\mathcal{L}(\theta) = \frac{1}{|\mathcal{B}|} \sum (V_\theta(x, t) - \hat{V}_{\hat{\theta}})^2$.
         \State Update value network: $\theta \leftarrow \theta - \eta \nabla_\theta \mathcal{L}(\theta)$ and  target value network: $\hat{\theta} \leftarrow \tau \theta + (1-\tau)\hat{\theta}$.
      \EndFor
      \State \textbf{Output:} Optimal Value Function $V_\theta$.
   \end{algorithmic}
\end{algorithm*}

\subsection{Off-Policy Improvement}
\label{subsec:off_policy_improvement}

While \cref{eq:loss_base} effectively mitigates variance in many scenarios, it faces challenges when the distribution of the base path diverges significantly from the target optimal path. To avoid the high variance associated with over-exploration and the mode collapse caused by over-exploitation, a dynamic sampling strategy is essential. Consequently, an off-policy improvement becomes imperative. We begin by formulating the off-policy path integral TD loss via the Girsanov theorem, followed by a proof showing that the variance of our estimator asymptotically vanishes as the control policy $u$ approaches optimality.

%


According to the celebrated  Girsanov's Theorem~\cite{oksendal2013stochastic}, the conditional density ratio of $\mathbb{P}^{0}$ with respect to $\mathbb{P}^u$ over the interval $[t, s]$, conditioned on $\mathcal{F}_t$, is given by
\begin{equation}
\label{eq:functional_2}
\begin{aligned}
&\frac{\mathrm{d} \mathbb{P}_{[t,s]}^{0}}{\mathrm{d} \mathbb{P}_{[t,s]}^u} \Bigg|_{\mathcal{F}_t} (X^u_{[t,s]})= \exp (- \hat{\mathcal{S}}(X^u_{[t,s]},u) ) = \exp \Bigg(- \int_t^s u(X^u_r, r) \, \mathrm{d} B_{r} - \int_{t}^{s} \frac{1}{2}\left\|u(X^u_r, r)\right\|^{2} \,\mathrm{d}r \Bigg).
\end{aligned}
\end{equation}

Therefore, we obtain the off-policy version of PI-VM loss:
\begin{definition}[\textbf{Off-Policy PI-VM Loss}]
\label{def:pi_td_loss_u}
The off-policy PI-VM loss has the same setup with \cref{def:pi_td_loss},
\begin{equation}
\label{eq:loss_off_policy}
    \ell^u(\theta) =  \left\| V_{\theta}(x,t) - \hat{V}_{\theta}(x, u, t, s) \right\|^2,
\end{equation}
where the only difference is that the uncontrolled target value is replaced with the controlled one:
\begin{equation}
\label{eq:target_value_controlled}
\begin{aligned}
    \hat{V}_{\theta}(x, u, t, s) = -\log \frac{1}{N} \sum_{j=1}^N  \exp\left( - \hat{\mathcal{W}}(X^{(j)}_{[t,s]})  - \hat{\mathcal{S}}(X^{(j)}_{[t,s]},u) - \hat{\mathcal{G}}(X^{(j)}_s)\right) ,
\end{aligned}
\end{equation}
where each trajectory $X^{(j)}_{[t,s]}$ follows the controlled path measure $\mathbb{P}^u_{[t,s]}$ initialized at $X_t^{(j)} = x$.  
\end{definition}
This off-policy version of loss has substantially small variance when the sampling policy $u$ is close to the optimal policy $u^*$.
To see this, we first define
\begin{equation}
\label{eq:M_definition}
\begin{aligned}
\mathcal{M}(X_{[t,s]}) &= \exp \Bigg( - V(X_s,s)  -  \int_t^s u\left(X_{r}, r\right) \mathrm{d} B_{r}  - \int_{t}^{s} \left[ f(X_r,r) + \frac{1}{2}\left\|u\left(X_{r}, r\right)\right\|^{2} \right] \,\mathrm{d}r \Bigg).
\end{aligned}
\end{equation}
Here, we utilize the true optimal value function in the analysis.
Since $V_\theta(x, t)$ approaches to $V(x, t)$, the asymptotic behavior of $\hat{V}_{\theta}(x, u, t, s)$ can be characterized in terms of $\mathcal{M}(X_{[t,s]})$.
%
%
The variance of $\mathcal{M}(X_{[t,s]})$ is characterized by the following theorem.
\begin{theorem}
\label{thrm:var_bound}
Suppose that the control $u$ satisfies $\sup_{x,r} \| u^*(x, r) - u(x, r) \|^2 \leq \kappa$. Then, the conditional variance of $\mathcal{M}(X_{[t,s]})$ satisfies the following upper bound 
\begin{equation}
\begin{aligned}
\mathbb{V}_{p^u}(\mathcal{M}(X_{[t,s]})|\mathcal{F}_t) 
\leq  \exp\left(-2V(X_t,t)\right) \left[ \exp\left( \kappa(s-t) \right) - 1 \right].
\end{aligned}
\end{equation}
\end{theorem}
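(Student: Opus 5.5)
The plan is to show that, under $\mathbb{P}^u$, $\mathcal{M}(X_{[t,s]})$ equals $\exp(-V(X_t,t))$ times a Doléans-Dade exponential martingale driven by the control mismatch $\delta_r := u^*(X_r,r)-u(X_r,r)$. The variance bound then follows from a second-moment estimate for that exponential.

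\textbf{Step 1 (Itô expansion of $V$).} Assume $V$ is a classical $C^{2,1}$ solution of the HJB equation
\begin{equation*}
\partial_t V + b\cdot\nabla V + \tfrac12\operatorname{tr}(\sigma\sigma^T\nabla^2 V) - \tfrac12\|\sigma^T\nabla V\|^2 + f = 0,
\end{equation*}
which is the equation underlying \cref{eq:path_integral_form_of_v}. Apply Itô's formula to $V(X_r,r)$ along the controlled dynamics $\mathrm{d}X_r=(b+\sigma u)\,\mathrm{d}r+\sigma\,\mathrm{d}B_r$ on $[t,s]$. Substituting the HJB equation and $u^*=-\sigma^T\nabla V$, so that $\nabla V^T\sigma u=-u^*\cdot u$ and $\|\sigma^T\nabla V\|^2=\|u^*\|^2$, gives
\begin{equation*}
V(X_s,s)-V(X_t,t)=\int_t^s\Big(\tfrac12\|u^*\|^2-f-u^*\cdot u\Big)\mathrm{d}r-\int_t^s u^*\cdot\mathrm{d}B_r .
\end{equation*}

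\textbf{Step 2 (identify $\mathcal{M}$).} Insert this expression for $-V(X_s,s)$ into the exponent of \cref{eq:M_definition}. The $f$ terms cancel, and the quadratic terms combine as $\tfrac12\|u^*\|^2-u^*\cdot u+\tfrac12\|u\|^2=\tfrac12\|\delta\|^2$. This leaves
\begin{equation*}
\mathcal{M}(X_{[t,s]})=\exp(-V(X_t,t))\,\mathcal{E}_{t,s},\qquad \mathcal{E}_{t,s}:=\exp\Big(\int_t^s\delta_r\cdot\mathrm{d}B_r-\tfrac12\int_t^s\|\delta_r\|^2\mathrm{d}r\Big).
\end{equation*}
Because $\|\delta\|^2\le\kappa$, Novikov's condition holds, so $\mathcal{E}$ is a true martingale and $\mathbb{E}_{p^u}[\mathcal{E}_{t,s}\mid\mathcal{F}_t]=1$. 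Hence $\mathbb{E}[\mathcal{M}\mid\mathcal{F}_t]=\exp(-V(X_t,t))$. This is consistent with \cref{prop:value_recursion} combined with the Girsanov reweighting \cref{eq:functional_2}, and gives a useful sanity check.

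\textbf{Step 3 (second moment).} Write
\begin{equation*}
\mathcal{E}_{t,s}^2=\exp\Big(\int 2\delta\cdot\mathrm{d}B-\tfrac12\int\|2\delta\|^2\mathrm{d}r\Big)\exp\Big(\int_t^s\|\delta_r\|^2\mathrm{d}r\Big).
\end{equation*}
The second factor is at most $e^{\kappa(s-t)}$. The first factor is again a Novikov exponential martingale, since $2\delta$ is bounded, so its conditional expectation equals $1$. Therefore $\mathbb{E}[\mathcal{E}^2\mid\mathcal{F}_t]\le e^{\kappa(s-t)}$. Since $\mathbb{V}(\mathcal{M}\mid\mathcal{F}_t)=e^{-2V(X_t,t)}\big(\mathbb{E}[\mathcal{E}^2\mid\mathcal{F}_t]-1\big)$, the claimed bound follows.

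\textbf{Main obstacle.} The main difficulty is Step 1. It requires enough regularity of $V$ to justify Itô's formula and the pointwise HJB identity. It also requires care with conventions: $u$ must live in the $m$-dimensional noise space so that $u\cdot\mathrm{d}B$ and $\sigma u$ are both meaningful, and the sign of the stochastic integral in \cref{eq:functional_2} must be read along the $\mathbb{P}^u$-Brownian motion. I would first take $V\in C^{2,1}$ under A3 together with smoothness of $f,g,b,\sigma$. If that assumption is unavailable, I would fall back on a localization argument using stopping times, combined with the boundedness of $\delta$.
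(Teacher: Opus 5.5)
Your proof is correct and uses the same two ingredients as the paper. First, Itô's formula combined with the HJB equation shows that $\mathcal{M}(X_{[t,s]}) = \exp(-V(X_t,t))\,\mathcal{E}_{t,s}$, where $\mathcal{E}_{t,s}$ is the stochastic exponential of $\int_t^s (u^*-u)\cdot \mathrm{d}B_r$. Second, $\mathcal{E}_{t,s}^2\, e^{-\int_t^s\|u^*-u\|^2\mathrm{d}r}$ is again a Novikov exponential martingale, which is exactly the paper's change of measure to $\mathbb{Q}^u$.

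The only difference is the final step. The paper goes through $\mathrm{d}\mathcal{M} = \mathcal{M}\,(u^*-u)\cdot\mathrm{d}B$ and Itô's isometry, bounds $\mathbb{E}[\mathcal{M}(X_{[t,r]})^2\mid\mathcal{F}_t]$ for every $r$, and integrates $\kappa\int_t^s e^{\kappa(r-t)}\,\mathrm{d}r$. You apply the second-moment bound once at time $s$ together with $\mathbb{E}[\mathcal{E}_{t,s}\mid\mathcal{F}_t]=1$. This gives the identical bound more directly and avoids having to justify square-integrability for the isometry.
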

\begin{proof}
    Available in Appendix \ref{appendix:variance_analysis}.
\end{proof}
The theorem illustrates that the variance bound diminishes as a function of the control error $\kappa$. 
As the current control $u$ converges toward $u^*$, the upper bound becomes increasingly tighter, ensuring that the estimation variance asymptotically vanishes.
\begin{table}
  \caption{Quantitative evaluation  on three unimodal SOC tasks.}
  \label{tab:single_model}
  \centering
  \begin{small}
    \begin{sc}
    \resizebox{\columnwidth}{!}{%
      \begin{tabular}{l cc cc cc}
        \toprule
        & \multicolumn{2}{c}{OU Linear} & \multicolumn{2}{c}{OU Quadratic (easy)} & \multicolumn{2}{c}{OU Quadratic (hard)} \\
        
        \cmidrule(lr){2-3} \cmidrule(lr){4-5} \cmidrule(lr){6-7}
        
        Method & Control $\mathcal{L}_2$ ($\downarrow$) & Time ($\downarrow$) & Control $\mathcal{L}_2$ ($\downarrow$) & Time ($\downarrow$) & Control $\mathcal{L}_2$ ($\downarrow$) & Time ($\downarrow$) \\
        \midrule
        
        RE    & 0.00003 $\pm$ 0.00000 & 87$ms$ & 0.00019 $\pm$ 0.00001 & 54$ms$ & 0.00453 $\pm$ 0.00034 & 50$ms$ \\
        CE    & 0.00056 $\pm$ 0.00035 & 41$ms$ & 0.00215 $\pm$ 0.00019 & 22$ms$ & 1.46971 $\pm$ 0.01211 & 21$ms$ \\
        VAR   & 0.00032 $\pm$ 0.00016 & 38$ms$ & 0.09234 $\pm$ 0.00057 & 22$ms$ & 1.46971 $\pm$ 0.01211 & 21$ms$ \\
        LVAR  & 0.00017 $\pm$ 0.00005 & 41$ms$ & 0.00061 $\pm$ 0.00003 & 22$ms$ & 0.01521 $\pm$ 0.00091 & 21$ms$ \\
        REINFORCE  & 0.00097 $\pm$ 0.00033 & 50$ms$ & 0.01615 $\pm$ 0.00035 & 28$ms$ & 0.20843 $\pm$ 0.01778 & 27$ms$ \\
        SOCM  & \textbf{0.00000} $\pm$ \textbf{0.00000} & 128$ms$ & 0.00044 $\pm$ 0.00001 & 183$ms$ & 1.46971 $\pm$ 0.01211 & 194$ms$ \\
        SOCM-A    & \textbf{0.00000} $\pm$ \textbf{0.00000} & 41$ms$ & 0.00073 $\pm$ 0.00005 & 21$ms$ & 1.46971 $\pm$ 0.01211 & 21$ms$ \\
        AM    & \textbf{0.00000} $\pm$ \textbf{0.00000} & 41$ms$ & 0.00028 $\pm$ 0.00001 & 21$ms$ & 0.00674 $\pm$ 0.00036 & 21$ms$ \\
        PI-VM (Ours) & \textbf{0.00000} $\pm$ \textbf{0.00000} & \textbf{8$ms$} & \textbf{0.00010} $\pm$ \textbf{0.00000} & \textbf{7$ms$} & \textbf{0.00078} $\pm$ \textbf{0.00005} & \textbf{7$ms$} \\
        \bottomrule
      \end{tabular}
    }
    \end{sc}
  \end{small}
\end{table}
\subsection{Algorithm Design}
\label{subsec:algorithm_design}
As summarized in \cref{alg:pi_td}, training alternates between generating trajectories to populate a Replay Buffer and optimizing the value function via TD learning. Trajectories are rolled out using the current best control, with short $M$-step path branches sampled for each state and stored in the buffer. The TD target is estimated from these branches using $N$ Monte Carlo samples, balancing variance reduction and update depth. During optimization, mini-batches are sampled uniformly from the buffer, and the value network is updated to minimize the squared difference between predicted and target values, while a target network updated via exponential moving average stabilizes training. 
\begin{table*}[t]
  \caption{Scalability analysis on the Quadratic OU Easy task with increasing dimensions.}
  \label{tab:scalability}
  \centering
  \begin{small}
    \begin{sc}
    \resizebox{\columnwidth}{!}{%
      \begin{tabular}{c cc cc cc}
        \toprule

        & \multicolumn{2}{c}{SOCM} & \multicolumn{2}{c}{AM} & \multicolumn{2}{c}{PI-VM (Ours)} \\

        \cmidrule(lr){2-3} \cmidrule(lr){4-5} \cmidrule(lr){6-7}

        Dim & Control $\mathcal{L}_2$  ($\downarrow$) & Time ($\downarrow$) & Control $\mathcal{L}_2$  ($\downarrow$) & Time ($\downarrow$) & Control $\mathcal{L}_2$  ($\downarrow$) & Time ($\downarrow$) \\
        \midrule
        
        10  & {0.00027} $\pm$ {0.00001} & 130$ms$ & {0.00026} $\pm$ {0.00001} & 29$ms$ & \textbf{0.00006} $\pm$ \textbf{0.00000} & \textbf{7$ms$} \\

        20  & {0.00044} $\pm$ {0.00001} & 182$ms$ & {0.00027} $\pm$ {0.00002} & 29$ms$ & \textbf{0.00011} $\pm$ \textbf{0.00000} & \textbf{8$ms$} \\


        80  & \multicolumn{2}{c}{Out of Memory} & 0.02960 $\pm$ 0.00024 & 28$ms$ & \textbf{0.00044} $\pm$ \textbf{0.00004} & \textbf{12$ms$} \\
        
        
        150 & \multicolumn{2}{c}{Out of Memory} & 0.07620 $\pm$ 0.00047 & 32$ms$ & \textbf{0.00080} $\pm$ \textbf{0.00017} & \textbf{18$ms$} \\
        
        200 & \multicolumn{2}{c}{Out of Memory} & 0.08983 $\pm$ 0.00051 & 38$ms$ & \textbf{0.00256} $\pm$ \textbf{0.00033} & \textbf{21$ms$} \\
        \bottomrule
      \end{tabular}
    }
    \end{sc}
  \end{small}
  
\end{table*}

\begin{figure}[ht]
  \centering
  \begin{subfigure}{0.24\textwidth}
    \includegraphics[width=\linewidth]{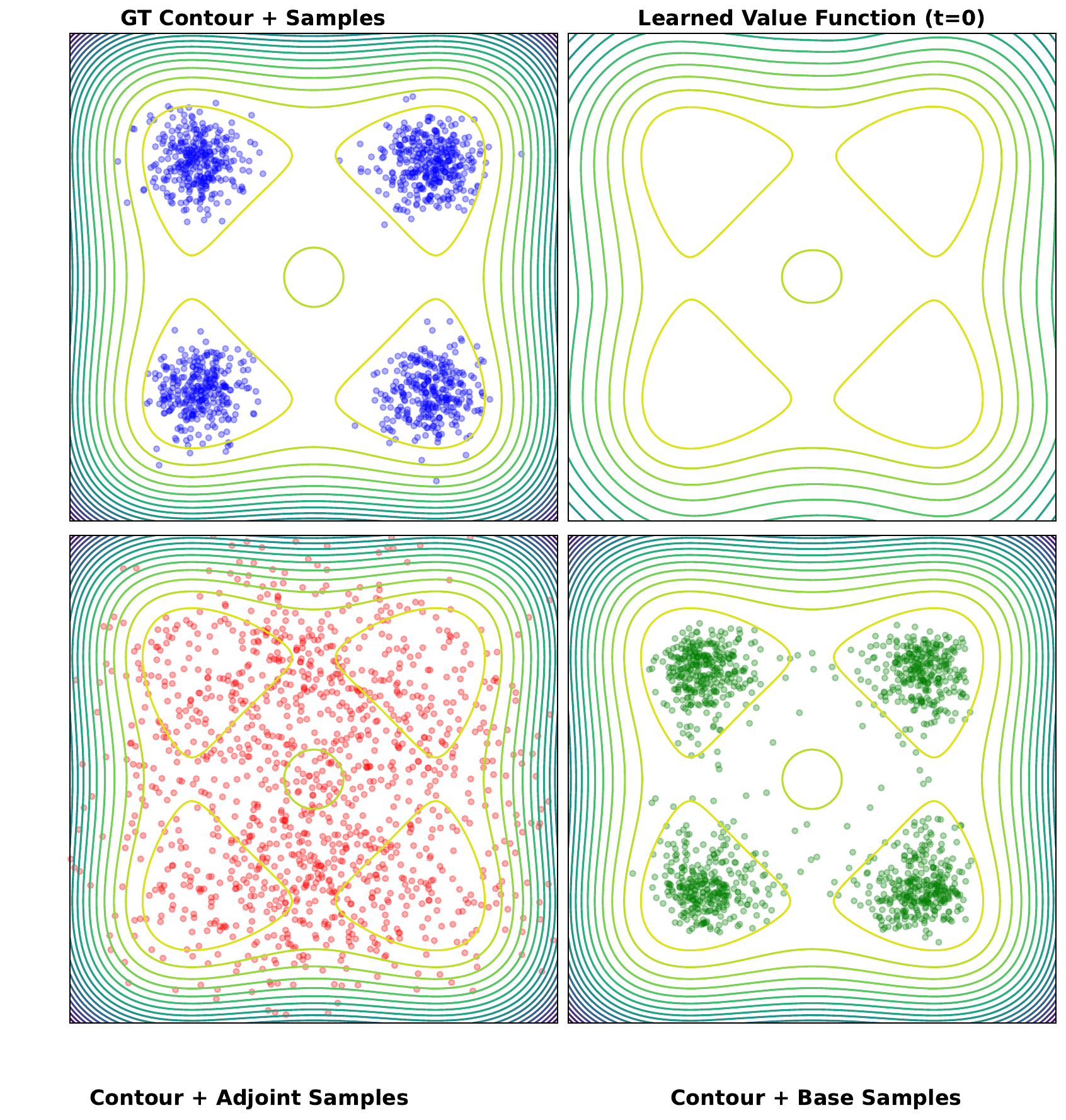}
  \end{subfigure}\hfill
  \begin{subfigure}{0.24\textwidth}
    \includegraphics[width=\linewidth]{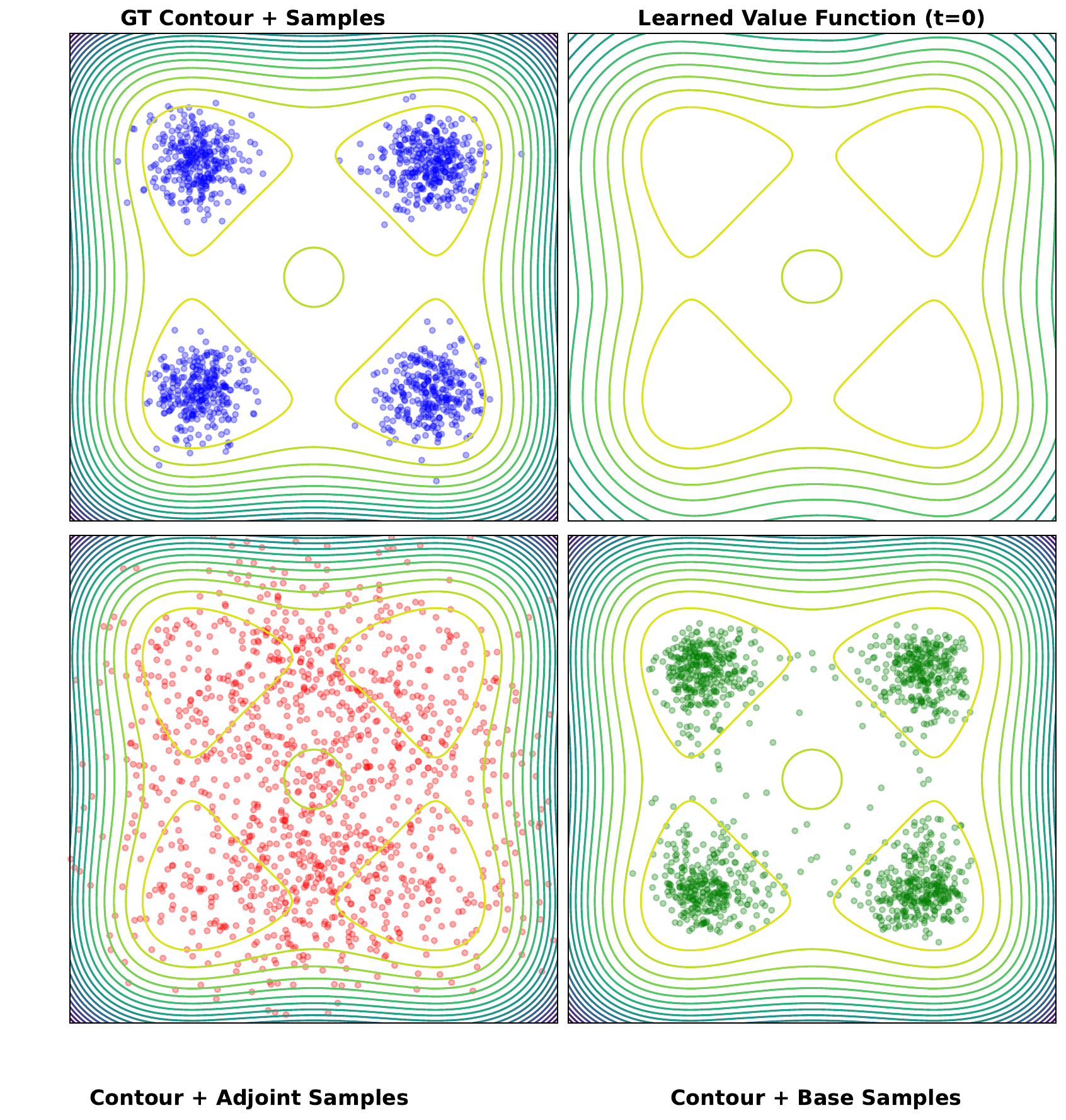}
  \end{subfigure}\hfill
  \begin{subfigure}{0.24\textwidth}
    \includegraphics[width=\linewidth]{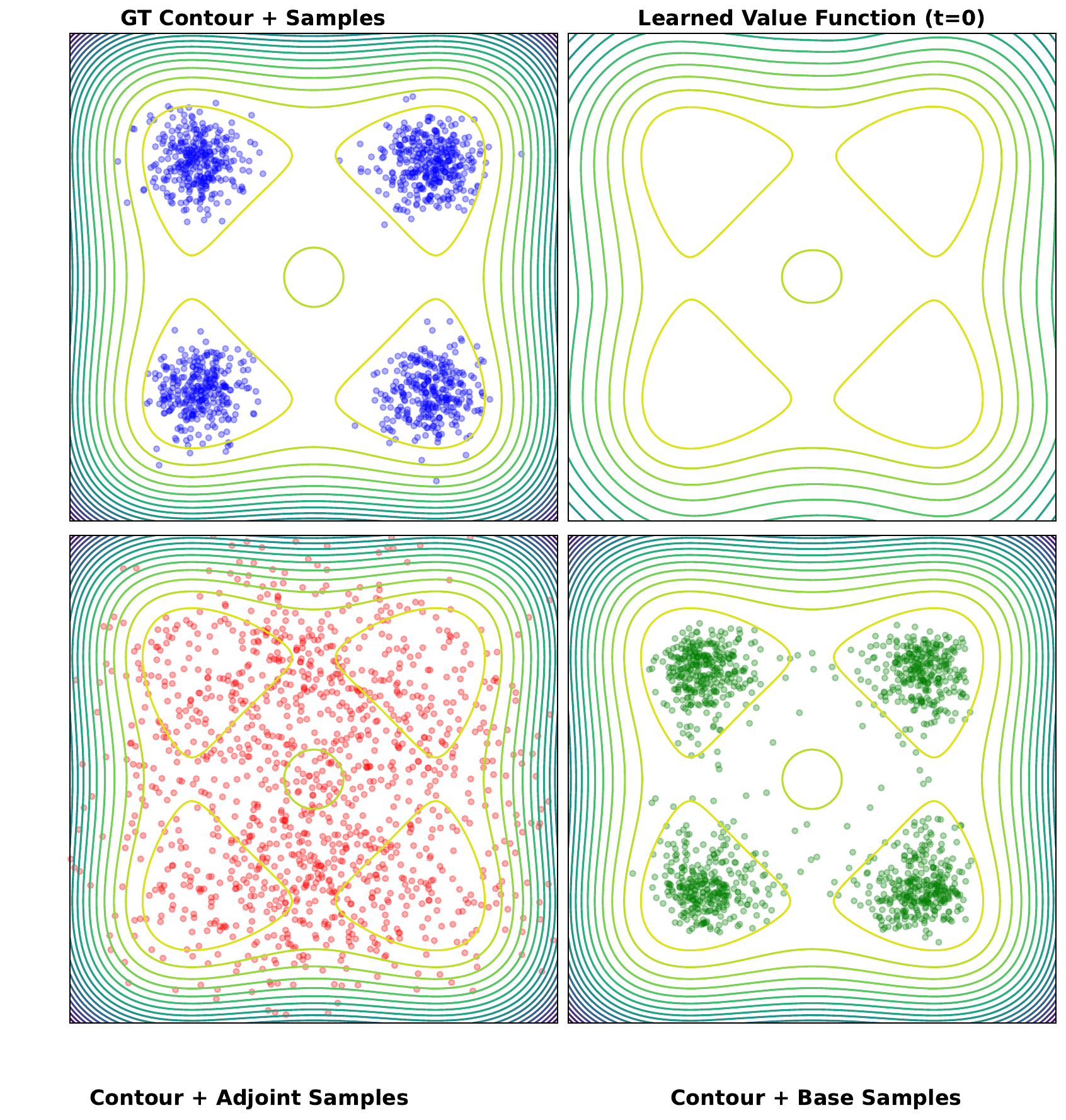}
  \end{subfigure}\hfill
  \begin{subfigure}{0.24\textwidth}
    \includegraphics[width=\linewidth]{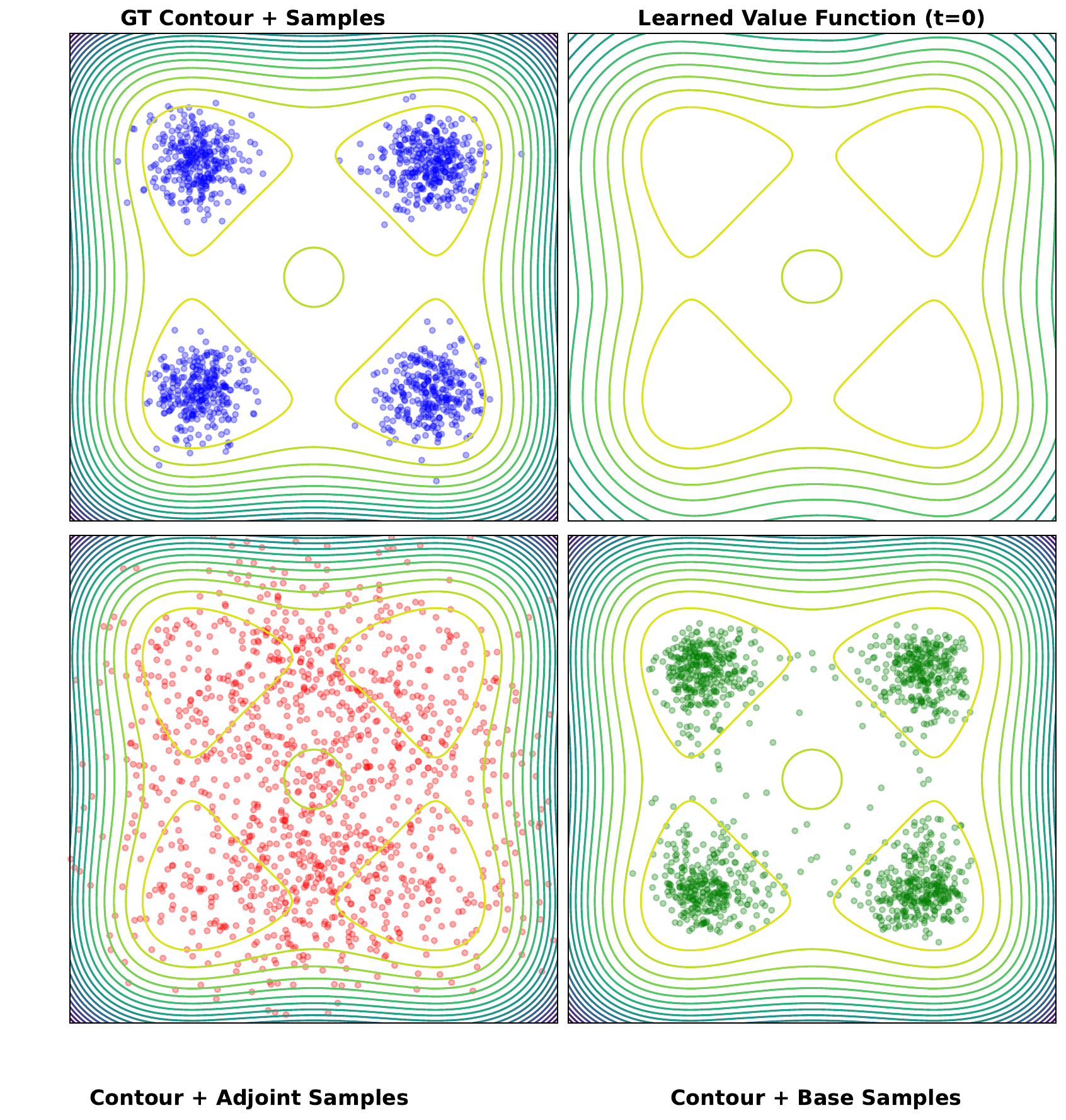}
  \end{subfigure}
  
  \caption{
    Sampling and value comparison on Many Well sampling task. 
    (Left) Ground Truth samples and Ground Truth value landscape; 
    (Middle Left) Our learned value landscape; 
    (Middle Right) Samples from AM; 
    (Right) Samples from our method.
  }
  \label{fig:many_well}
\end{figure}

\section{Experiments}
We will begin by providing an overview of the problems and baselines, then describe the training setups, and finally introduce evaluation metrics employed across all problems. We provide a detailed introduction in Appendix \ref{appendix:soc_exp_settings} and \ref{appendix:sampling_exp_settings}.

\textbf{Problems and Baselines.}
We consider five tasks that we adapt from ~\cite{domingo2024stochastic, blessing2025trust}: \textit{Linear Ornstein Uhlenbeck}, \textit{Quadratic Ornstein
Uhlenbeck (easy)}, \textit{Quadratic Ornstein Uhlenbeck (hard)}, \textit{Gaussian Mixture Models}, and \textit{Many Well}. The first three settings are unimodal SOC control problems and the last two are multimodal sampling problems. We compare 7 policy-based baselines: \textit{relative entropy (RE)}, \textit{cross entropy (CE)}~\cite{holdijk2023stochastic}, \textit{variance (VAR)}~\cite{nusken2021solving}, \textit{log-variance (LVAR)}~\cite{nusken2021solving}, \textit{adjoint matching (AM)}~\cite{domingo-enrich2025adjoint}, \textit{stochastic optimal control matching (SOCM)}~\cite{domingo2024stochastic}, and \textit{stochastic optimal control matching with adjoint loss (SOCM-A)}~\cite{domingo2024stochastic}. 

\textbf{Training and Evaluation.} For the first three experiments, we strictly align our training hyperparameters with those in \textit{SOCM}~\cite{domingo2024stochastic}. For the last two problems, we follow the configuration described in \cite{blessing2025trust}. Across all experiments, we utilize a 6-layer fully-connected neural network with GELU activations~\cite{hendrycks2016gaussian} and Adam optimizer~\cite{kingma2014adam}. For the first four benchmarks, where the ground truth optimal control is analytically tractable, we adopt the control $\mathcal{L}_2$ Error as our primary evaluation metric: $\mathcal{L}_2 (u,u^*) := \frac{1}{d}\mathbb{E}_{\mathbb{P}^{u^*}} \left[ \int_0^1 \|u^* - u\|^2 (X^{u^*}_s, s) \, \mathrm{d}s \right]$. This metric quantifies the Euclidean distance between the learned policy and the ground truth optimal control. It is worth noting that this comparison is particularly stringent for our method, as we infer the control indirectly via the gradient of the value function (i.e., $u = - \sigma^T\nabla V$), whereas policy-based baselines optimize the policy directly. All experiments were conducted on a single NVIDIA RTX 4060 GPU, and reported results are averaged over four independent runs with distinct random seeds.

\begin{table*}[t] 
    \centering
    \label{tab:combined_tables}
    
    \begin{minipage}{0.38\textwidth} 
          \caption{Ablation study on {sample size} and {forward steps}. In each cell, the {top value} is the control $\mathcal{L}_2$ error, and the {bottom value} is the Runtime.}
          \label{tab:ablation_stacked}
          \centering
          
          \newcommand{\mycell}[2]{
            \begin{tabular}{@{}c@{}}
              #1 \\  
               #2 
            \end{tabular}
          }
  
        \begin{small}
        \begin{sc}
        \resizebox{\columnwidth}{!}{%
          \renewcommand{\arraystretch}{1.3} 
          
          \begin{tabular}{c ccc}
            \toprule
            & \multicolumn{3}{c}{Forward Steps $M$} \\
            \cmidrule(lr){2-4}
            Samples $N$ & 4 & 8 & 16 \\
            \midrule
        
            4 & \mycell{0.00013}{7$ms$} 
              & \mycell{0.00011}{7$ms$} 
              & \mycell{0.00010}{8$ms$} \\
        
            8 & \mycell{0.00012}{7$ms$} 
              & \mycell{\textbf{0.00010}}{\textbf{7$ms$}}
              & \mycell{0.00010}{9$ms$} \\
        
            16 & \mycell{0.00011}{9$ms$} 
               & \mycell{0.00010}{13$ms$} 
               & \mycell{0.00009}{18$ms$} \\
        
            \bottomrule
          \end{tabular}
        }
        \end{sc}
        \end{small}
    \end{minipage}
    \hfill 
    \begin{minipage}{0.58\textwidth}
          \caption{Quantitative evaluation on the multimodal Gaussian Mixture Model (GMM) sampling task. "Dist." refers to the distance between component means, and "Var." refers to the variance of each component.}
  \label{tab:gmm_results}
  \centering
  \begin{small}
    \begin{sc}
    \resizebox{\linewidth}{!}{
      \begin{tabular}{c @{\hspace{1mm}} c @{\hspace{4mm}} cc}
        \toprule
        \multicolumn{2}{c}{Setting} & \multicolumn{2}{c}{Control $\mathcal{L}_2$ ($\downarrow$)} \\
        \cmidrule(r){1-2} \cmidrule(l){3-4}
        Dist. & Var. & AM & PI-VM (Ours) \\
        \midrule
        \multirow{2}{*}{Close} 
          & Large & 0.016 $\pm$ 0.000 & \textbf{0.010} $\pm$ \textbf{0.000} \\
          & Small & 6.996 $\pm$ 0.025 & \textbf{0.025} $\pm$ \textbf{0.003} \\
        \cmidrule{1-4}
        \multirow{2}{*}{Far} 
          & Large & 0.018 $\pm$ 0.002 & \textbf{0.011} $\pm$ \textbf{0.000} \\
          & Small & 4.317 $\pm$ 0.260 & \textbf{0.024} $\pm$ \textbf{0.006} \\
        \bottomrule
      \end{tabular}
    }
    \end{sc}
  \end{small}
    \end{minipage}
\end{table*}

\subsection{Quantitative Evaluation}
\textbf{Unimodal SOC Tasks.}
As shown in~\cref{tab:single_model}, we compare the mean and standard deviation of control $\mathcal{L}_2$ error against other baseline methods across three unimodal SOC control tasks. While all methods perform comparably on the simple \textit{Linear} task, our \textbf{PI-VM} method significantly outperforms baselines on the \textit{Quadratic} tasks. Notably, in the \textit{``Hard''} setting, state-of-the-art methods like \textit{SOCM} and \textit{SOCM-A} fail to converge. In contrast, our value-based approach successfully approximates the global landscape, achieving the lowest error. Furthermore, \textbf{PI-VM} demonstrates superior efficiency, running $10$--$20\times$ faster than baselines. This directly addresses the critical bottleneck of prior works: by avoiding computationally prohibitive forward trajectory simulations and stochastic back-propagation, our method achieves both higher precision and real-time inference speeds.

\textbf{Multimodal Sampling Task.}  We further evaluate the proposed method on the 20 dimensions multimodal \textit{Gaussian Mixture Model (GMM)} task under varying difficulty settings, defined by the distance between modes (Dist.) and the variance of modes (Var.). As shown in \cref{tab:gmm_results}, our method consistently outperforms the baseline across all configurations. Notably, our approach demonstrates superior robustness in scenarios with \textit{Small Variance} (i.e., highly peaked distributions). In these challenging settings, the baseline method suffers from catastrophic failure. This is likely because policy-based methods struggle with the vanishing gradients typical of sharp energy landscapes. In contrast, our value-based approach maintains exceptional stability, achieving low errors. This result highlights the efficacy of our method in handling complex, non-convex landscapes where traditional baselines diverge.

\textbf{High-dim Scalability.} \cref{tab:scalability} investigates the scalability on the \textit{Quadratic OU Easy} task with state dimensions increasing from 10 to 200. The results reveal severe limitations in baseline methods: \textit{SOCM} quickly encounters memory bottlenecks (OOM) at $d=80$ due to the prohibitive cost of storing full trajectories, while \textit{AM} suffers from optimization instability, resulting in error explosion and eventual divergence at $d=200$. In stark contrast, ours \textbf{PI-VM} effectively breaks the curse of dimensionality, maintaining robust convergence and real-time inference speeds even at $d=200$. This confirms that by bypassing the computational burden of forward simulation and adjoint back-propagation, our value-based formulation offers a uniquely scalable solution for high-dimensional stochastic control.
\subsection{Mode Coverage Qualitative Analysis}
To evaluate the algorithm's capability in handling multimodal distributions and traversing high-energy barriers, we conducted a visualization experiment on a 50 dimensions \textit{Many-well} potential landscape. As shown in the top row of \cref{fig:many_well}, our learned Value Function (Top-Right) accurately reconstructs the underlying non-convex geometry, providing a correct guidance for control. The bottom row highlights the sampling performance: the \textit{Adjoint Matching} baseline (Bottom-Left, red) suffers from high variance, with a significant number of samples failing to converge and scattering across high-energy barriers. In contrast, our method (Bottom-Right, green) generates high-fidelity samples that are strictly confined to the stable equilibria. This visual evidence demonstrates that our approach effectively mitigates the mode-mixing issues and achieves a distribution density that aligns closely with the Ground Truth.
\subsection{Ablation Studies}
\label{subsec:ablation_studies}
We investigated the impact of varying the number of Monte Carlo samples ($N$) and the forward TD simulation steps ($M$) on control performance and computational efficiency. As summarized in \cref{tab:ablation_stacked}, increasing both $N$ and $M$ generally leads to lower control $\mathcal{L}_2$ error, albeit at the cost of increased runtime. We observe diminishing returns in performance improvement beyond certain thresholds. For instance, extending the steps from $M=8$ to $M=16$ (at $N=8$) yields no reduction in loss but increases runtime. Similarly, doubling the samples to $N=16$ significantly increases computational overhead without improving accuracy. Consequently, the empirical results suggest that the configuration ($N=8, M=8$) offers the optimal trade-off between accuracy and inference speed, serving as our default setting. Additional ablation studies on network depth, batch size, and sample size are provided in~\cref{tab:different_batch,tab:fixed_batch,tab:network_depth}, demonstrating the robustness and efficiency of our method across different training configurations.

\section{Conclusion}

We presented PI-VM, a novel value-based algorithm for high-dimensional Stochastic Optimal Control. By revisiting Path Integral Control and deriving a temporal recursive formulation, we effectively eliminated the need for computationally expensive and high variance full-trajectory simulations. Our method leverages TD learning, experience replay buffer, and Girsanov theorem to achieve stable and efficient off-policy training. Experimental results confirm that PI-VM significantly outperforms existing policy-based baselines in terms of training speed, numerical stability, and scalability to high-dimensional state spaces. Despite these advantages, as a value-based approach, PI-VM still requires computationally expensive automatic differentiation to obtain the control signal, which can limit runtime efficiency in practice.

{
    \small
    \bibliographystyle{unsrt}
    \bibliography{references}
}


\newpage
\appendix






\section{Recursive Structure of the Optimal Value Function}
\label{appendix:unbiasedness_properties}
\addcontentsline{apctoc}{section}{A. Recursive Structure of the Optimal Value Function}
In this section, we demonstrate detailed proofs for the conclusion about the recursive structure of the optimal value function in Proposition \ref{prop:value_recursion}.
\textbf{Restatement of Proposition \ref{prop:value_recursion}.} 
\textit{For any $0 < t < s \leq 1$, the optimal value function satisfies the following unbiased recursive relations under the base measure $\mathbb{P}^{0}$ and the controlled measure $\mathbb{P}^u$:}
\begin{equation*}
\begin{aligned}
\exp&\left( - V(X_t, t) \right)  \\ &= \mathbb{E}_{\mathbb{P}^{0}} \left[ \exp\left( - V(X_s, s) - \int_t^s f(X_r, r) \mathrm{d}r \right) \Bigg| \mathcal{F}_t \right] \\
&= \mathbb{E}_{\mathbb{P}^{u}} \left[ \exp\left( - V(X_s, s) - \int_t^s \left[ f(X_r, r) + \frac{1}{2}\left\|u\left(X_{r}, r\right)\right\|^{2} \right] \mathrm{d}r - \int_t^s u\left(X_{r}, r\right) \mathrm{d} B_{r} \right) \Bigg| \mathcal{F}_t \right].
\end{aligned}
\end{equation*}

\begin{proof}
Following the framework established in Section \ref{label:Preliminaries}, the value function admits the following Feynman-Kac representation:
\begin{equation}
\label{eq:FK_rep}
\exp \left( - V(X_t,t) \right) = \mathbb{E}_{\mathbb{P}^{0}}\left[\exp\left( - g(X_1) - \int_{t}^{1} f(X_r,r)\,\mathrm{d}r\right)\,\Big|\, \mathcal{F}_t\right].
\end{equation}
By invoking the tower property of conditional expectation, we can decompose the integral over the interval $[t, 1]$ into $[t, s]$ and $[s, 1]$ for any $s \in (t, 1)$:
\begin{equation*}
\begin{aligned}
\exp & \left( - V(X_t,t) \right) \\ &= \mathbb{E}_{\mathbb{P}^{0}}\left[\mathbb{E}_{\mathbb{P}^{0}}\left[\exp\left( - g(X_1) - \int_{s}^{1} f(X_r,r)\,\mathrm{d}r - \int_{t}^{s} f(X_r,r)\,\mathrm{d}r \right) \, \Big| \, \mathcal{F}_s \right] \,\Big|\, \mathcal{F}_t\right] \\
&= \mathbb{E}_{\mathbb{P}^{0}}\left[\mathbb{E}_{\mathbb{P}^{0}}\left[\exp\left( - g(X_1) - \int_{s}^{1} f(X_r,r)\,\mathrm{d}r \right) \, \Big| \, \mathcal{F}_s \right] \exp\left(- \int_{t}^{s} f(X_r,r)\,\mathrm{d}r \right) \,\Big|\, \mathcal{F}_t\right].
\end{aligned}
\end{equation*}
Substituting the definition of the value function at time $s$ from \eqref{eq:FK_rep} into the inner expectation, we obtain the first equality of the proposition:
\begin{equation*}
\exp \left( - V(X_t,t) \right) = \mathbb{E}_{\mathbb{P}^{0}}\left[\exp\left( - V(X_s, s) \right) \exp\left(- \int_{t}^{s} f(X_r,r)\,\mathrm{d}r \right) \,\Big|\, \mathcal{F}_t\right].
\end{equation*}

To derive the second equality, we perform a change of measure from the base distribution $p^{0}$ to the controlled distribution $\mathbb{P}^u$. By applying the Girsanov Theorem, the Radon-Nikodym derivative (likelihood ratio) over the interval $[t, s]$ is given by:
\begin{equation*}
\frac{\mathrm{d}\mathbb{P}^{0}}{\mathrm{d}\mathbb{P}^u} = \exp\left( - \int_t^s u(X_r, r) \mathrm{d}B_r - \frac{1}{2} \int_t^s \|u(X_r, r)\|^2 \mathrm{d}r \right).
\end{equation*}
Substituting this into the expectation, the second form of the recursive structure follows immediately.
\end{proof}
\section{Convergence Analysis of Recursive Update}
\label{appendix:convergence_analysis_of_recursive_update}
In this section, we establish the theoretical convergence of the iterative sequence $\{V_k\}_{k=1}^\infty$ to the true optimal value function $V$.
The proof leverages the contraction mapping principle in the space of transformed value functions.

\textbf{Restatement of Theorem \ref{thrm:convergence_of_bootstrapping}.}
\textit{Suppose the following assumptions hold:
\begin{enumerate}
\item[A1.] The running cost is bounded from below by a positive constant, i.e., $f(x, t) \geq f_{\inf} > 0$;
\item[A2.] The terminal cost $g(x)$ is continuous and bounded;
\item[A3.] The underlying SDE satisfies standard Lipschitz and growth conditions, ensuring the transition semigroup possesses the Feller property.
\end{enumerate}
Let $\mathcal{C}_b(\mathcal{X} \times [0, 1])$ be the Banach space of bounded continuous functions. Let $\{V_k\}_{k=1}^\infty \subset \mathcal{C}_b$ be a sequence of functions satisfying $V_k(x, 1) = g(x)$ for all $k$, and for $t < 1$, the recursive update:}
\begin{equation*}
\exp\left( - V_{k}(x, t) \right) = \mathbb{E}_{\mathbb{P}^{0}}\left[ \exp\left( - \tilde{V}_{k - 1}(X_{t+\varepsilon}, t+\varepsilon) - \int_{t}^{t+\varepsilon} f(X_r,r)\,\mathrm{d}r \right) \, \Bigg | \, X_t = x \right],
\end{equation*}
\textit{where $\tilde{V}_{k-1}$ is defined such that $\tilde{V}_{k-1}(x, s) = V_{k-1}(x, s)$ for $s < 1$ and $\tilde{V}_{k-1}(x, 1) = g(x)$. Then, the sequence $V_k(x, t)$ converges to the unique optimal value function $V(x, t)$ as $k \to \infty$.}

\begin{proof}
We define the transformed function $Z_k(x, t) = \exp(-V_k(x, t))$. The terminal condition implies $Z_k(x, 1) = \exp(-g(x)) \eqqcolon \zeta(x)$ for all $k$. We define the operator $\mathcal{T}$ on $\mathcal{C}_b$ as follows:
$$(\mathcal{T} \phi)(x, t) \coloneqq \mathbb{E} \left[ \tilde{\phi}(X_{t+\varepsilon}, t+\varepsilon) \exp\left( - \int_t^{t+\varepsilon} f(X_r, r) \mathrm{d}r \right) \Big| X_t = x \right],$$
where $\tilde{\phi}(x, s) = \phi(x, s)$ for $s < 1$ and $\tilde{\phi}(x, 1) = \zeta(x)$. Under Assumption A3, $\mathcal{T}$ maps $\mathcal{C}_b$ into itself.

For any $\phi, \psi \in \mathcal{C}_b$, consider the difference $|\mathcal{T}\phi - \mathcal{T}\psi|$. Note that at the boundary $s=1$, we have $\tilde{\phi}(x, 1) - \tilde{\psi}(x, 1) = \zeta(x) - \zeta(x) = 0$. For $t < 1-\varepsilon$:
$$|(\mathcal{T}\phi)(x, t) - (\mathcal{T}\psi)(x, t)| \leq \mathbb{E} \left[ |\phi(X_{t+\varepsilon}, t+\varepsilon) - \psi(X_{t+\varepsilon}, t+\varepsilon)| e^{-\int_t^{t+\varepsilon} f \mathrm{d}r} \right].$$
By Assumption A1, $e^{-\int_t^{t+\varepsilon} f \mathrm{d}r} \leq e^{-f_{\inf} \varepsilon} = \gamma < 1$. Thus, we obtain:
$$\| \mathcal{T}\phi - \mathcal{T}\psi \|_\infty \leq \gamma \| \phi - \psi \|_\infty.$$
This proves $\mathcal{T}$ is a contraction mapping. By the Banach Fixed Point Theorem, there exists a unique fixed point $Z \in \mathcal{C}_b$. 

As $Z_k \to Z$, it follows that $V_k \to V$.
\end{proof}

\section{Variance Analysis}
\label{appendix:variance_analysis}
In this section, we provide the theoretical justification for the variance reduction in our algorithm.
We first derive a general variance decomposition formula and then establish an explicit upper bound for the conditional variance under control discrepancies.

\textbf{Restatement of Proposition \ref{prop:variance_decomposition}.} \textit{Let $Z_t$ be the random variable $\exp(-g(X_1) - \int_t^1 f(X_r, r) \,\mathrm{d}r)$.
For any $t < s < 1$, the conditional variance $\mathbb{V}(Z_t|\mathcal{F}_t)$ can be decomposed as follows:}
\begin{equation*}
\begin{aligned}
&\mathbb{V}(Z_t|\mathcal{F}_t) \\ &= \mathbb{V}\left(\mathbb{E}[Z_s|\mathcal{F}_s] \exp\left(-\int_{t}^{s} f(X_r,r),\mathrm{d}r \right) \Bigg| \mathcal{F}_t\right) + \mathbb{E} \left[\mathbb{V}(Z_s|\mathcal{F}_s) \exp\left(- 2\int_{t}^{s} f(X_r,r),\mathrm{d}r \right) \Bigg| \mathcal{F}_t \right].
\end{aligned}
\end{equation*}
\begin{proof}
Consider the relationship between $Z_t$ and $Z_s$. By the definition of the integral, we have:
\begin{equation}
\label{eq:Z_relation}
Z_t = Z_s \exp\left( - \int_t^s f(X_r, r) ,\mathrm{d}r \right).
\end{equation}
To simplify notation, let $\Gamma_{t,s} = \exp\left( - \int_t^s f(X_r, r) \,\mathrm{d}r \right)$. 
Note that $\Gamma_{t,s}$ is $\mathcal{F}_s$-measurable but not $\mathcal{F}_t$-measurable. 
We apply the Law of Total Variance conditioned on $\mathcal{F}_s$ within the outer filtration $\mathcal{F}_t$:
\begin{equation}
\label{eq:total_var_law}
\mathbb{V}(Z_t | \mathcal{F}_t) = \mathbb{V}\big( \mathbb{E}[Z_t | \mathcal{F}_s] \big| \mathcal{F}_t \big) + \mathbb{E} \big[ \mathbb{V}(Z_t | \mathcal{F}_s) \big| \mathcal{F}_t \big].
\end{equation}
We analyze the two terms on the right-hand side of \eqref{eq:total_var_law} individually.

Using the relation \eqref{eq:Z_relation} and the fact that $\Gamma_{t,s}$ is $\mathcal{F}_s$-measurable, the inner expectation of the first term is:
\begin{equation*}
\mathbb{E}[Z_t | \mathcal{F}_s] = \mathbb{E}[Z_s \Gamma_{t,s} | \mathcal{F}_s] = \Gamma_{t,s} \mathbb{E}[Z_s | \mathcal{F}_s].
\end{equation*}
Taking the variance of this expression conditioned on $\mathcal{F}_t$ gives:
\begin{equation*}
\mathbb{V}\big( \mathbb{E}[Z_t | \mathcal{F}_s] \big| \mathcal{F}_t \big) = \mathbb{V}\left( \mathbb{E}[Z_s | \mathcal{F}_s] \exp\left( - \int_t^s f(X_r, r) ,\mathrm{d}r \right) \Bigg| \mathcal{F}_t \right).
\end{equation*}
This matches the first term of the proposition.

Similarly, we compute the inner variance in the second term:
\begin{equation*}
\mathbb{V}(Z_t | \mathcal{F}_s) = \mathbb{V}(Z_s \Gamma_{t,s} | \mathcal{F}_s).
\end{equation*}
Since $\Gamma_{t,s}$ is $\mathcal{F}_s$-measurable, it behaves as a constant with respect to the conditional variance $\mathbb{V}(\cdot | \mathcal{F}_s)$.
Thus, we have:
\begin{equation*}
\mathbb{V}(Z_t | \mathcal{F}_s) = \Gamma_{t,s}^2 \mathbb{V}(Z_s | \mathcal{F}_s) = \exp\left( - 2 \int_t^s f(X_r, r) ,\mathrm{d}r \right) \mathbb{V}(Z_s | \mathcal{F}_s).
\end{equation*}
Taking the expectation of this term conditioned on $\mathcal{F}_t$ yields:
\begin{equation*}
\mathbb{E} \big[ \mathbb{V}(Z_t | \mathcal{F}_s) \big| \mathcal{F}_t \big] = \mathbb{E} \left[ \mathbb{V}(Z_s | \mathcal{F}_s) \exp\left( - 2 \int_t^s f(X_r, r) ,\mathrm{d}r \right) \Bigg| \mathcal{F}_t \right].
\end{equation*}
This matches the second term of the proposition.
Combining the first and second terms completes the proof.
\end{proof}

\bigskip

\textbf{Restatement of Theorem \ref{thrm:var_bound}.} \textit{Suppose that the control $u$ satisfies the uniform bound $\sup_{x,r} \| u^*(x, r) - u(x, r) \|^2 \leq \kappa$. Then, the conditional variance of $\mathcal{M}(X_{[t,s]})$ (defined by Equation \ref{eq:M_definition}) under the controlled measure $\mathbb{P}^u$ satisfies:}
\begin{equation*}
\mathbb{V}_{\mathbb{P}^u}(\mathcal{M}(X_{[t,s]})|\mathcal{F}_t) \leq \exp\left(-2V(X_t,t) \right) \left[ \exp\left( \kappa(s-t) \right) - 1 \right].
\end{equation*}

\begin{proof}
Consider the log-estimator defined by $Y_{t,s} = \log \mathcal{M}(X_{[t,s]})$. By applying Itô's formula to the definition of $Y_{t,s}$, we obtain:
\begin{equation*}
\begin{aligned}
\mathrm{d}Y_{t,s} = -\left[ \frac{\partial V}{\partial s} + \nabla V \cdot b + \nabla V \cdot (\sigma u) + \frac{1}{2} \text{tr}(\sigma \sigma^\top \nabla^2 V) + f + \frac{1}{2}\|u\|^2 \right] \mathrm{d}s - (\sigma^\top \nabla V + u) \cdot \mathrm{d}B_s.
\end{aligned}
\end{equation*}
Recall that in our framework, the value function $V$ satisfies the Hamilton-Jacobi-Bellman (HJB) equation:
\begin{equation}\label{eq:hjb_ref}
\frac{\partial V}{\partial s} + \nabla V \cdot b + \frac{1}{2} \text{tr}(\sigma \sigma^\top \nabla^2 V) + f - \frac{1}{2}\|\sigma^\top \nabla V\|^2 = 0.
\end{equation}
By substituting \eqref{eq:hjb_ref} into the expression for $\mathrm{d}Y_{t,s}$ and invoking the optimal control relation $u^* = - \sigma^\top\nabla V$, the dynamics simplify to:
\begin{equation*}
\begin{aligned}
\mathrm{d}Y_{t,s} &= -\left[ \frac{1}{2}\|\sigma^\top \nabla V\|^2 + \nabla V \cdot (\sigma u) + \frac{1}{2}\|u\|^2 \right] \mathrm{d}s - (\sigma^\top \nabla V + u) \cdot \mathrm{d}B_s \\
&= - \frac{1}{2} \|u - u^*\|^2 \mathrm{d}s - (u - u^*) \cdot \mathrm{d} B_s.
\end{aligned}
\end{equation*}
Integrating from $t$ to $s$ with the initial condition $Y_{t,t}=-V(X_t,t)$, we have:
\begin{equation*}
\begin{aligned}
Y_{t,s} = -V(X_t,t) - \int_t^s \frac{1}{2} \|u - u^*\|^2 \mathrm{d}r - \int_t^s (u - u^*) \cdot \mathrm{d} B_r.
\end{aligned}
\end{equation*}

Next, we apply Itô's formula to the estimator $\mathcal{M}(X_{[t,s]}) = \exp(Y_{t,s})$. The evolution of $\mathcal{M}(X_{[t,s]})$ is governed by:
\begin{equation}\label{eq:M_dynamics}
\begin{aligned}
\mathrm{d} \mathcal{M}(X_{[t,s]}) &= \mathcal{M}(X_{[t,s]}) \, \mathrm{d} Y_{t,s} + \frac{1}{2} \mathcal{M}(X_{[t,s]}) \left( \mathrm{d} Y_{t,s} \right)^2 \\
&= \mathcal{M}(X_{[t,s]}) \left( u^* - u \right) \cdot \mathrm{d} B_s,
\end{aligned}
\end{equation}
which yields the integral representation:
\begin{equation*}
\begin{aligned}
\mathcal{M}(X_{[t,s]}) = \exp \left(- V(X_t,t) \right) + \int_t^s \mathcal{M}(X_{[t,r]}) \left( u^* - u \right) \cdot \mathrm{d} B_r.
\end{aligned}
\end{equation*}

By applying Itô's isometry to \eqref{eq:M_dynamics}, the conditional variance is bounded as follows:
\begin{equation}\label{eq:var_integral_bound}
\begin{aligned}
\mathbb{V}_{\mathbb{P}^u}(\mathcal{M}(X_{[t,s]})|\mathcal{F}_t) &= \mathbb{E}_{\mathbb{P}^u} \left[\int_t^s \mathcal{M}(X_{[t,r]})^2 \left \| u^* - u \right \|^2 \mathrm{d} r \, \Big| \mathcal{F}_t \right] \\
&\leq \kappa \int_t^s \mathbb{E}_{\mathbb{P}^u} \left[ \mathcal{M}(X_{[t,r]})^2 \, \Big| \mathcal{F}_t\right] \mathrm{d}r.
\end{aligned}
\end{equation}

To estimate the expectation in \eqref{eq:var_integral_bound}, we introduce a measure transformation. Consider an auxiliary process under distribution $\mathbb{Q}^u$ defined by the shifted SDE $\mathrm{d}X_r = (b - \sigma u + 2\sigma u^*) \mathrm{d}r + \sigma \mathrm{d}B_r$. According to Girsanov's Theorem:
\begin{equation*}
\begin{aligned}
&\mathbb{E}_{\mathbb{P}^u} \left[ \mathcal{M}(X_{[t,r]})^2 \, \Big| \mathcal{F}_t\right] \\
&= \mathbb{E}_{\mathbb{P}^u} \Bigg[ \exp \left( -2V - \int_t^r \|u - u^*\|^2 \mathrm{d}\tau - 2\int_t^r (u - u^*) \cdot \mathrm{d} B_\tau \right)  \, \Big| \mathcal{F}_t \Bigg] \\
&= \mathbb{E}_{\mathbb{P}^u} \Bigg[ \exp \left( -2V + \int_t^r \|u - u^*\|^2 \mathrm{d}\tau \right) \exp \left(- 2\int_t^r \|u - u^*\|^2 \mathrm{d}\tau - 2\int_t^r (u - u^*) \cdot \mathrm{d} B_\tau \right)   \, \Big| \mathcal{F}_t \Bigg] \\
&= \mathbb{E}_{\mathbb{P}^u} \Bigg[ \exp \left( -2V + \int_t^r \|u - u^*\|^2 \mathrm{d}\tau \right) \frac{\mathrm{d}\mathbb{Q}^u}{\mathrm{d}\mathbb{P}^u} \, \Big| \mathcal{F}_t \Bigg] \\
&= \mathbb{E}_{\mathbb{Q}^u} \Bigg[ \exp \left( -2V + \int_t^r \|u - u^*\|^2 \mathrm{d}\tau \right) \, \Big| \mathcal{F}_t \Bigg] \\
&\leq \exp \left( -2V(X_t,t) \right) \exp \left( \kappa(r-t) \right).
\end{aligned}
\end{equation*}
Substituting this bound back into \eqref{eq:var_integral_bound} and performing the final integration, we conclude:
\begin{equation*}
\begin{aligned}
\mathbb{V}_{p^u}(\mathcal{M}(X_{[t,s]})|\mathcal{F}_t) &\leq \kappa \int_t^s \exp \left( -2V(X_t,t) \right) \exp \left( \kappa(r-t) \right) \mathrm{d}r \\
&= \exp \left( -2V(X_t,t) \right) \left[ \exp\left( \kappa(s-t) \right) - 1 \right].
\end{aligned}
\end{equation*}
\end{proof}

\section{Further Ablation Studies}
\subsection{Ablation study on neural network depths}
We performed an ablation study on the OU Linear task to investigate the impact of neural network depth on control performance. Three network architectures were evaluated: Shallow (64-128-64), Default (64-128-256-128-64), and Deep (64-128-256-512-256-128-64). The control $\mathcal{L}_2$ errors over training iterations are reported in Table~\cref{tab:network_depth}.
\begin{table}[h]
\label{tab:network_depth}
\centering
\caption{Ablation study on neural network depths over iterations}
\begin{tabular}{c|c|c|c}
\hline
Iterations & Shallow & Default & Deep \\
\hline
3000 & 0.000818 & 0.000542 & 0.000434 \\
5000 & 0.000289 & 0.000210 & 0.000192 \\
7000 & 0.000157 & 0.000125 & 0.000123 \\
9000 & 0.000094 & \textbf{0.000082} & 0.000089 \\
\hline
\end{tabular}
\end{table}
The results show that deeper networks consistently achieve lower $\mathcal{L}_2$ control errors compared to shallower architectures, indicating improved function approximation capability. Notably, the Deep network achieves the fastest reduction in error during early iterations, suggesting that additional layers help capture the dynamics of the OU Linear task more effectively. However, the performance gain from Deep over Default becomes marginal after 7000 iterations, implying that a moderately deep network already provides sufficient capacity for this task. Overall, this study highlights the trade-off between network complexity and computational efficiency when selecting architectures for PI-VM training.

\subsection{Ablation study on batch size $B$ and sample size $N$}
We conducted an ablation study to evaluate the impact of batch size $B$ and the number of trajectory samples $N$ on the OU Linear task. Table~\cref{tab:different_batch} reports the control $\mathcal{L}_2$ errors over training iterations for three different $(B, N)$ configurations.
\begin{table}[h]
\label{tab:different_batch}
\centering
\caption{Ablation study on batch size $B$ and sample size $N$}
\begin{tabular}{c|c|c|c}
\hline
Iterations & B=6400, N=8 & B=3200, N=16 & B=1600, N=32 \\
\hline
3000 & 0.000542 & 0.000571 & 0.000722 \\
5000 & 0.000210 & 0.000244 & 0.000299 \\
9000 & \textbf{0.000082} & 0.000093 & 0.000114 \\
\hline
\end{tabular}
\end{table}
Larger batch sizes combined with smaller sample numbers (B=6400, N=8) consistently achieve the lowest control errors, suggesting that a sufficiently large batch stabilizes the TD updates while moderate sampling per state is sufficient to reduce variance. Reducing the batch size or increasing the number of samples per state slightly degrades performance, indicating a trade-off between the number of parallel trajectories and per-state variance reduction.

\subsection{Ablation study on sample size $N$ (fixed batch $B$)}
To further isolate the effect of sample size $N$, we fixed the batch size at $B=6400$ and varied $N$. Table~\cref{tab:fixed_batch} summarizes the control $\mathcal{L}_2$ errors for different sample sizes.
\begin{table}[h]
\label{tab:fixed_batch}
\centering
\caption{Ablation study on fixed batch size $B$ with varying sample size $N$}
\begin{tabular}{c|c|c|c}
\hline
Iterations & B=6400, N=8 & B=6400, N=16 & B=6400, N=32 \\
\hline
3000 & 0.000542 & 0.000483 & 0.000477 \\
9000 & {0.000082} & {0.000082} & \textbf{0.000081} \\
\hline
\end{tabular}
\end{table}
The results show that increasing the number of samples $N$ per state slightly improves the final control performance, with diminishing returns after $N=16$. This suggests that, with a large enough batch size, moderate per-state sampling is sufficient to obtain stable TD targets, and additional samples yield only marginal gains while increasing computational cost.


\section{Further Related Works}
\paragraph{Policy-based SOC Solvers.}
Existing SOC solvers predominantly follow the policy-based paradigm, often termed Iterative Diffusion Optimization. These methods directly parameterize the control policy and optimize it by minimizing divergences (e.g., Relative Entropy, Cross-Entropy~\cite{holdijk2023stochastic}, Variance~\cite{nusken2021solving}, and Log-Variance~\cite{nusken2021solving}) between the optimal path measures and those induced by the controlled SDEs. Conversely, a parallel line of research investigates matching-based strategies, including Stochastic Optimal Control Matching~\cite{domingo2024stochastic} and Adjoint Matching~\cite{domingo-enrich2025adjoint}. A detailed summary and discussion of above methods can be found in~\cite{domingo2024taxonomy}. To improve training stability, recent studies have incorporated trust-region techniques within the path space~\cite{blessing2025trust, von2025learning}. Beyond standard control problems, these techniques are finding increasing utility in sampling tasks~\cite{liu2025adjoint, havens2025adjoint} and diffusion model fine-tuning~\cite{domingo-enrich2025adjoint}. Their training pipelines rely heavily on forward simulation and stochastic back-propagation. However, this on-policy simulation mechanism suffers from significant drawbacks: it is not only computationally expensive but also prone to trapping the policy in local optima. Moreover, the high variance associated with gradient estimation often leads to instability during the training process.

\paragraph{Value-based SOC Solvers.} Instead of directly finding the optimal control in the non-convex landscape, HJB points out a different way. Many works have been proposed try to solve HJB numerically, such as finite difference methods~\cite{bonnans2004fast, bavnas2022numerical}, finite element methods~\cite{jensen2013convergence} and semi-Lagrangian scheme~\cite{debrabant2013semi, carlini2020semi}. However, such discretize-then-optimize paradigm suffer from the curse of dimensionality, besides, the high-order term involed in the HJB equation further hinder its scalability. 

Compared to the direct pde solver, the celebrated path integral control~\cite{todorov2006linearly, kappen2007introduction, kappen2005linear} follows another paradigm. It follows the Cole-Hopf transform~\cite{evans2022partial} and Feynman-Kac Lemma~\cite{oksendal2013stochastic}, convert the original non-trival PDE problem into a sampling problem.  Based on this theory, many algorithms and improvements have been proposed such as Policy Improvement~\cite{theodorou2010generalized}, Model Predictive~\cite{williams2016aggressive, williams2017model}, State Feedback~\cite{thijssen2015path, gomez2014policy} and Cross Entropy~\cite{kappen2016adaptive, zhang2014applications}. While these methods have demonstrated success in some low-dimensional tasks such as driving~\cite{williams2016aggressive} and trajectory optimization~\cite{kazim2024recent}, a critical limitation persists: the variance of the path integral estimator scales poorly with dimensionality. In high-dimensional settings, the Monte Carlo estimates suffer from excessive variance, leading to prohibitive sample complexity and unstable convergence. For full introduction and survey, please refer to~\cite{kazim2024recent}.

\paragraph{Continuous time RL.}


The theoretical foundation of continuous-time reinforcement learning was fundamentally established by~\cite{wang2020reinforcement} through the development of an exploratory Hamilton-Jacobi-Bellman equation that incorporates entropy regularization.
This foundational framework has been further extended to model-free temporal difference methods by~\cite{settai2025temporal} and to complex systems involving jump-diffusion processes by~\cite{cheridito2025deep}.
In a parallel development,~\cite{quer2024connecting} identified a mathematical equivalence between variance minimization in importance sampling and LQ-SOC.
Another significant research trajectory emphasizes discretization-free methodologies by leveraging the martingale characterization of value functions and the $q$-learning theory introduced by~\cite{jia2022policy,jia2023q}.
This line of work has most recently been synthesized with diffusion-guided policies by~\cite{hua2025continuous} to achieve robust and scalable reinforcement learning for high-dimensional continuous control tasks.
Existing methods either solve the HJB equation directly or rely on standard RL policy iteration.
Conversely, we leverage the analytical structure of the optimal value function in LQ-SOC, enabling the direct learning of the value function and the immediate recovery of optimal control via its gradient.




\section{Full Algorithm}
We provide the full algorithm pipeline in \cref{alg:pi_td_full}.

\begin{algorithm*}[t]
   \caption{Path Integral Value Matching (PI-VM)}
   \label{alg:pi_td_full}
   \begin{algorithmic}[1] 
      \State \textbf{Input:} Problem Setup $\{f,g,b,\sigma,\rho_0\}$; Sample Steps $M$; Sample Batch $N$; Replay Buffer Update Batch Size $K$; Training Batch Size $B$; Learning Rate $\eta$; EMA Param $\tau$.
      \State \textbf{Initialize:} Replay Buffer $\mathcal{D}$; Value Network $V_\theta$; Target Value Network ${V}_{\hat{\theta}}$.
      
      \For{episode $e = 1$ \textbf{to} max episodes}
         \If{need update replay buffer}
            \State Set control $u_t(x) = -\sigma^T(t)\nabla {V}_{\hat{\theta}}(x,t)$.
            \State Sample $K$ trajectories $\{X^{(i)}\}_{i=1}^K$.
            \For{each state-time pair $\{ X^{(i)}_t, t\}$ in trajectories}
               \State Sample $N$ short-term path branches $\{Y_{[t,s]}^{(j)} \mid Y_{t}^{(j)} = X^{(i)}_t \}_{j=1}^N$, where $s = \min(1, t + M\Delta t)$.
               \State Compute functionals $\{\hat{\mathcal{W}}^{(j)}, \hat{\mathcal{S}}^{(j)}\}_{j=1}^N$ via \cref{eq:functional_1,eq:functional_2}.
               \State Push $\{X^{(i)}_t, t, s, \{Y_{s}^{(j)}, \hat{\mathcal{W}}^{(j)}, \hat{\mathcal{S}}^{(j)}\}_{j=1}^N\}$ into $\mathcal{D}$.
            \EndFor            
         \EndIf

         \State Sample batch $\{x_{t_i}, t_i, s_i, \{y^{(j)}_{s_i}, \hat{\mathcal{W}}^{(j)}, \hat{\mathcal{S}}^{(j)}\}_{j=1}^N\}_{i=1}^B \sim \mathcal{D}$.
         \State Compute terminal functionals $\{\{\hat{\mathcal{G}}(y^{(j)}_{s_i})\}_{j=1}^N\}_{i=1}^B$ following \cref{eq:functional_1}.
         \State Compute target values $\{ \hat{V}_{\hat{\theta}}(x_{t_i}, u_{t_i}, t_i, s_i)\}_{i=1}^B$ following \cref{eq:target_value_controlled}.
         \State Compute Loss: $\mathcal{L}(\theta) = \frac{1}{B} \sum_{i=1}^B \| V_\theta(x_{t_i}, t_i) - \hat{V}_{\hat{\theta}}(x_{t_i}, u_{t_i}, t_i, s_i) \|^2$.
         \State Update parameters: $\theta \leftarrow \theta - \eta \nabla_\theta \mathcal{L}(\theta)$.
         \State Update target parameters: $\hat{\theta} \leftarrow \tau {\theta} + (1-\tau)\hat{\theta}$.
      \EndFor
      \State \textbf{Output:} Optimal Value Function $V_\theta$.
   \end{algorithmic}
\end{algorithm*}

\section{SOC Tasks Settings}
\label{appendix:soc_exp_settings}

\subsection{Problem 1: Quadratic Ornstein-Uhlenbeck}

\subsubsection{Dynamics \& Cost Functions}
\begin{itemize}
    \item \textbf{Source Distribution ($\rho_0(x)$):} The process initializes from a scaled standard Gaussian:
    \begin{equation}
        x_0 \sim \mathcal{N}(0, 0.5^2 I) \implies \Sigma_{\text{init}} = 0.25 I
    \end{equation}
    
    \item \textbf{Base Drift ($b$) \& Gradient ($\nabla_x b$):} Linear drift dynamics:
    \begin{equation}
        b(x, t) = Ax, \quad \nabla_x b(x, t) = A
    \end{equation}
    
    \item \textbf{Diffusion Coefficient ($\sigma$):} Constant diffusion:
    \begin{equation}
        \sigma(t) = \sigma_0
    \end{equation}
    
    \item \textbf{Running Cost ($f$) \& Gradient ($\nabla_x f$):} Quadratic running cost (assuming symmetric $P$):
    \begin{equation}
        f(x, t) = x^\top P x, \quad \nabla_x f(x, t) = 2Px
    \end{equation}
    
    \item \textbf{Terminal Cost ($g$) \& Gradient ($\nabla_x g$):} Quadratic terminal cost (assuming symmetric $Q$):
    \begin{equation}
        g(x) = x^\top Q x, \quad \nabla_x g(x) = 2Qx
    \end{equation}
\end{itemize}

\subsubsection{Analytical Optimal Control $u^*$}
The optimal control is linear in the state $x$, derived from the solution to the Riccati equation.
\begin{equation}
    u^*_t(x) = -2\sigma_0^\top F_t x
\end{equation}
where $F_t$ is the solution to the following \textbf{Riccati Differential Equation} (solved backward in time):
\begin{equation}
    \frac{dF_t}{dt} + A^\top F_t + F_t A - 2 F_t \sigma_0 \sigma_0^\top F_t + P = 0, \quad \text{with } F_T = Q
\end{equation}

\subsubsection{Analytical Optimal Value Function}
The form of the optimal control implies that the Value Function is quadratic with respect to the state $x$:
\begin{equation}
    V(x, t) = x^\top F_t x + \alpha(t)
\end{equation}
where:
\begin{itemize}
    \item $F_t$ (Hessian of Value) is the solution to the Riccati differential equation mentioned above.
    \item $\alpha(t)$ (Scalar Offset) accounts for the constant term evolution caused by stochastic noise:
    \begin{equation}
        \alpha(t) = \int_t^T \text{Tr}(\sigma_0 \sigma_0^\top F_s) \, ds
    \end{equation}
\end{itemize}

\subsubsection{Specific Parameter Settings}
Two difficulty settings are defined: Easy and Hard.

\begin{table}[h]
    \centering
    \caption{Parameter Settings for Quadratic OU}
    \label{tab:quad_ou_params}
    \begin{tabular}{lcc}
    \toprule
    \textbf{Parameter} & \textbf{Easy Setting} & \textbf{Hard Setting} \\
    \midrule
    Dimension ($d$) & 20 & 20 \\
    Drift Matrix ($A$) & $0.2 \cdot I$ & $I$ \\
    Running Cost ($P$) & $0.2 \cdot I$ & $I$ \\
    Terminal Cost ($Q$) & $0.1 \cdot I$ & $0.5 \cdot I$ \\
    Diffusion ($\sigma_0$) & $I$ & $I$ \\
    Time Steps ($T$) & 50 & 50 \\
    Learning Rates ($\eta$) & 1e-4 & 1e-4 \\
    Batch Size ($B$) & 6400 & 12800 \\
    Max Episodes & 60000 & 80000 \\
    \bottomrule
    \end{tabular}
\end{table}

\subsection{Problem 2: Linear Ornstein-Uhlenbeck}

\subsubsection{Dynamics \& Cost Functions}
\begin{itemize}
    \item \textbf{Source Distribution ($\rho_0(x)$):} The process initializes from a scaled standard Gaussian:
    \begin{equation}
        x_0 \sim \mathcal{N}(0, 0.5^2 I)
    \end{equation}
    
    \item \textbf{Base Drift ($b$) \& Gradient ($\nabla_x b$):} Linear drift:
    \begin{equation}
        b(x, t) = Ax, \quad \nabla_x b(x, t) = A
    \end{equation}
    
    \item \textbf{Diffusion Coefficient ($\sigma$):} Constant diffusion (matrix valued):
    \begin{equation}
        \sigma(t) = \sigma_0
    \end{equation}
    
    \item \textbf{Running Cost ($f$) \& Gradient ($\nabla_x f$):} Zero running cost:
    \begin{equation}
        f(x, t) = 0, \quad \nabla_x f(x, t) = 0
    \end{equation}
    
    \item \textbf{Terminal Cost ($g$) \& Gradient ($\nabla_x g$):} Linear terminal cost (inner product):
    \begin{equation}
        g(x) = \langle \gamma, x \rangle = \gamma^\top x, \quad \nabla_x g(x) = \gamma
    \end{equation}
\end{itemize}

\subsubsection{Analytical Optimal Control $u^*$}
For linear costs, the optimal control depends only on time (open-loop component). It is derived analytically as:
\begin{equation}
    u^*_t(x) = -\sigma_0^\top e^{A^\top (T-t)} \gamma
\end{equation}

\subsubsection{Specific Parameter Settings}
Parameters are randomly sampled once at the beginning of the simulation using a fixed seed.

\begin{itemize}
    \item \textbf{Dimension ($d$):} 10
    \item \textbf{Random Sampling ($\xi_{ij}$):} Sampled from a standard normal distribution (implied by typical setup, e.g., $0.1 \times \mathcal{N}(0,1)$).
    \item \textbf{Drift Matrix ($A$):} $A = -I + (\xi_{ij})_{1 \le i,j \le d}$
    \item \textbf{Diffusion ($\sigma_0$):} $\sigma_0 = I + (\xi_{ij})_{1 \le i,j \le d}$
    \item \textbf{Cost Vector ($\gamma$):} $\gamma = \mathbf{1}$ (Vector of all ones)
    \item \textbf{Time Steps ($T$):} 100
    \item \textbf{Learning Rates ($\eta$):} 1e-4
    \item \textbf{Batch Size ($B$):} 6400
    \item \textbf{Max Episodes:} 60000
\end{itemize}

\subsubsection{Analytical Optimal Value Function}
Since the cost function and dynamics are linear, the Value Function is a linear function of $x$ plus a time-dependent scalar offset:
\begin{equation}
    V(x, t) = \psi(t)^\top x + \beta(t)
\end{equation}
where:
\begin{itemize}
    \item $\psi(t)$ (Gradient of Value) satisfies $\frac{d\psi}{dt} = -A^\top \psi$ with $\psi(T) = \gamma$. The analytical solution is:
    \begin{equation}
        \psi(t) = e^{A^\top (T-t)} \gamma
    \end{equation}
    \item $\beta(t)$ (Scalar Offset) is given by:
    \begin{equation}
        \beta(t) = -\frac{1}{2} \int_t^T \psi(s)^\top \sigma_0 \sigma_0^\top \psi(s) \, ds
    \end{equation}
\end{itemize}

\section{Sampling Tasks Settings}
\label{appendix:sampling_exp_settings}

\subsection{Shared Dynamics \& Notation}
Both the GMM problem and the Many Well problem utilize a controlled Ornstein-Uhlenbeck (OU) process with the following shared settings:
\begin{itemize}
    \item \textbf{State Space:} $x \in \mathbb{R}^d$
    \item \textbf{Time Horizon:} $t \in [0, 1]$
    \item \textbf{Diffusivity Schedule:} A cosine schedule is used to modulate the noise level:
    \begin{equation}
        \zeta(t) = (C_{\max} - C_{\min}) \cos^2 \left( \frac{t\pi}{2T} \right) + C_{\min}
    \end{equation}
    \item \textbf{Noise Coefficient:} Defined as a function of the diffusivity:
    \begin{equation}
        \sigma(t) = \eta \sqrt{2\zeta(t)}
    \end{equation}
\end{itemize}

\subsection{Problem 3: Gaussian Mixture Model (GMM)}

\subsubsection{Dynamics \& Cost Functions}

\begin{itemize}
    \item \textbf{Source Distribution ($\rho_0$):} The process initializes from a wide Gaussian distribution with $\eta = 2.5$:
    \begin{equation}
        \rho_0(x) = \mathcal{N}(x \mid 0, \eta^2 I)
    \end{equation}

    \item \textbf{Base Drift ($b$) \& Gradient ($\nabla_x b$):} Identical to the Many Well problem, using a linear mean-reverting drift:
    \begin{equation}
        b(x, t) = -\zeta(t) x, \quad \nabla_x b(x, t) = -\zeta(t) I
    \end{equation}

    \item \textbf{Running Cost ($f$) \& Gradient ($\nabla_x f$):} Zero running cost:
    \begin{equation}
        f(x, t) = 0, \quad \nabla_x f(x, t) = 0
    \end{equation}

    \item \textbf{Terminal Cost ($g$) \& Gradient ($\nabla_x g$):}
    The terminal cost forces the terminal distribution to match the target GMM $\rho_{\text{target}}(x) = \sum_{k=1}^K \pi_k \mathcal{N}(x \mid \mu_k, \Sigma_k)$.
    \begin{equation}
        g(x) = \log \mathbb{P}_T(x) - \log \rho_{\text{target}}(x)
    \end{equation}
    Given the prior stationary distribution $\mathbb{P}_T(x) = \mathcal{N}(0, \eta^2 I)$, the explicit cost is:
    \begin{equation}
    \begin{aligned}
        g(x) = &- \left( \frac{D}{2}\log(2\pi\eta^2) + \frac{\|x\|^2}{2\eta^2} \right) \\
               &- \log \left( \sum_{k=1}^K \pi_k \exp\left( -\frac{1}{2}(x-\mu_k)^\top \Sigma_k^{-1} (x-\mu_k) - \frac{1}{2}\log |2\pi \Sigma_k| \right) \right)
    \end{aligned}
    \end{equation}
    
    The gradient is derived using the posterior responsibility $\gamma_k(x)$:
    \begin{equation}
        \nabla_x g(x) = -\frac{x}{\eta^2} + \sum_{k=1}^K \gamma_k(x) \Sigma_k^{-1} (x - \mu_k)
    \end{equation}
    where $\gamma_k(x)$ is defined as:
    \begin{equation}
        \gamma_k(x) = \frac{\pi_k \mathcal{N}(x \mid \mu_k, \Sigma_k)}{\sum_{j=1}^K \pi_j \mathcal{N}(x \mid \mu_j, \Sigma_j)}
    \end{equation}
\end{itemize}

\subsubsection{Analytical Optimal Control $u^*$}
The optimal control is given by the score matching formula:
\begin{equation}
    u^*(x, t) = \eta \sqrt{2\zeta(t)} \left( \nabla_x \log \mathbb{Q}_t(x) - \nabla_x \log \mathbb{P}_t(x) \right)
\end{equation}

\textbf{1. Forward Score ($\nabla \log \mathbb{P}_t$):}
Since the prior dynamics preserve the stationary distribution $\mathcal{N}(0, \eta^2 I)$:
\begin{equation}
    \nabla_x \log \mathbb{P}_t(x) = -\frac{x}{\eta^2}
\end{equation}

\textbf{2. Backward Path Measure ($\mathbb{Q}_t$):}
The target distribution evolves as a time-varying GMM:
\begin{equation}
    \mathbb{Q}_t(x) = \sum_{k=1}^K \pi_k \mathcal{N}(x \mid \mu_k(t), \Sigma_k(t))
\end{equation}
The parameters evolve according to the scale factor $E(t) = \exp\left( -\int_t^T \zeta(s) \, ds \right)$. The explicit integral of the schedule is:
\begin{equation}
    \int_t^s \zeta(\tau) \, d\tau = \frac{C_{\max} + C_{\min}}{2} (s - t) + \frac{(C_{\max} - C_{\min}) T}{2\pi} \left[ \sin\left( \frac{\pi s}{T} \right) - \sin\left( \frac{\pi t}{T} \right) \right]
\end{equation}
The time-dependent means and covariances are:
\begin{align}
    \mu_k(t) &= \mu_k \cdot E(t) \\
    \Sigma_k(t) &= \Sigma_k E(t)^2 + \eta^2 (1 - E(t)^2) I
\end{align}

\textbf{3. Final Expression:}
Substituting these into the control formula:
\begin{equation}
    u^*(x, t) = \eta \sqrt{2\zeta(t)} \left( \left[ \sum_{k=1}^K \gamma_k(x, t) \Sigma_k(t)^{-1} (\mu_k(t) - x) \right] + \frac{x}{\eta^2} \right)
\end{equation}
where $\gamma_k(x, t)$ is the time-dependent responsibility:
\begin{equation}
    \gamma_k(x, t) = \frac{\pi_k \mathcal{N}(x \mid \mu_k(t), \Sigma_k(t))}{\sum_{j} \pi_j \mathcal{N}(x \mid \mu_j(t), \Sigma_j(t))}
\end{equation}

\subsubsection{Analytical Optimal Value Function}
The value function is the difference between the log-probabilities of the target path measure and the prior path measure:
\begin{equation}
    V(x, t) = - \left( \log \mathbb{Q}_t(x) - \log \mathbb{P}_t(x) \right)
\end{equation}
Expanding this yields:
\begin{equation}
\begin{aligned}
    V(x, t) = & \underbrace{- \left( \frac{D}{2}\log(2\pi\eta^2) + \frac{\|x\|^2}{2\eta^2} \right)}_{\text{Prior Log-Prob } (\log \mathbb{P}_t)} \\
    &- \underbrace{\log \left( \sum_{k=1}^K \pi_k \frac{1}{\sqrt{(2\pi)^D |\Sigma_k(t)|}} \exp\left( -\frac{1}{2}(x - \mu_k(t))^\top \Sigma_k(t)^{-1} (x - \mu_k(t)) \right) \right)}_{\text{Target Path Log-Prob } (\log \mathbb{Q}_t)}
\end{aligned}
\end{equation}

\subsubsection{Specific Parameter Settings}
Parameters are randomly sampled once at the beginning of the simulation using a fixed seed.

\begin{itemize}
    \item \textbf{Dimension ($d$):} 20
    \item \textbf{Time Steps ($T$):} 50
    \item \textbf{Learning Rates ($\eta$):} 5e-4
    \item \textbf{Batch Size ($B$):} 12800
    \item \textbf{GMM modes number:} 4
    \item \textbf{Max Episodes:} 30000
    \item \textbf{Dist:} Far $\mu \sim \mathcal{N}(0,4I)$, Close $\mu \sim \mathcal{N}(0,I)$
     \item \textbf{Var:} Small $\Sigma = 0.5 \cdot I$, Large $\Sigma =  I$  

\end{itemize}

\subsection{Problem 4: Many Well}

\subsubsection{Dynamics \& Cost Functions}

\begin{itemize}
    \item \textbf{Source Distribution ($\rho_0$):} The process initializes from a wide Gaussian distribution with $\eta = 1$:
    \begin{equation}
        \rho_0(x) = \mathcal{N}(x \mid 0, \eta^2 I)
    \end{equation}

    \item \textbf{Base Drift ($b$) \& Gradient ($\nabla_x b$):} Identical to the Many Well problem, using a linear mean-reverting drift:
    \begin{equation}
        b(x, t) = -\zeta(t) x, \quad \nabla_x b(x, t) = -\zeta(t) I
    \end{equation}

    \item \textbf{Running Cost ($f$) \& Gradient ($\nabla_x f$):} Zero running cost:
    \begin{equation}
        f(x, t) = 0, \quad \nabla_x f(x, t) = 0
    \end{equation}

    \item \textbf{Terminal Cost ($g$) \& Gradient ($\nabla_x g$):}
    The terminal cost forces the terminal distribution to match the target Many Well 
    Defined as $g(x) = \log \mathbb{P}_T(x) - \log \rho_{\text{target}}(x).$
    
    $$\mathbb{P}_T(x) \propto \exp\left(-\frac{\|x\|^2}{2\eta^2}\right)$$ 
    
    $$\rho_{\text{target}}(x) = \exp\left( - \sum_{i=1}^m (x_i^2 - \delta)^2 - \frac{1}{2} \sum_{j=m+1}^d x_j^2 \right)$$
    
Combining these (ignoring constants):

$$g(x) = \sum_{i=1}^m (x_i^2 - \delta)^2 + \frac{1}{2} \sum_{j=m+1}^d x_j^2 - \frac{1}{2\eta^2} \sum_{k=1}^d x_k^2$$
  
The gradient $\nabla_x g(x)$ is defined component-wise:

For $i \in \{1, \dots, m\}$ (Many Well dimensions):    $[\nabla_x g(x)]_i = 4x_i(x_i^2 - \delta) - \frac{x_i}{\eta^2}$

For $j \in \{m+1, \dots, d\}$ (Gaussian dimensions). $[\nabla_x g(x)]_j = x_j - \frac{x_j}{\eta^2}$
\end{itemize}

\subsubsection{Specific Parameter Settings}
Parameters are randomly sampled once at the beginning of the simulation using a fixed seed.

\begin{itemize}
    \item \textbf{Dimension ($d$):} 10
    \item \textbf{Time Steps ($T$):} 100
    \item \textbf{Learning Rates ($\eta$):} 5e-4
    \item \textbf{Batch Size ($B$):} 12800
    \item \textbf{Max Episodes:} 50000
    \item \textbf{Many Well modes ($m$):} 2
     \item \textbf{Many Well mean ($\delta$):} 2 

\end{itemize}





\end{document}